\newcommand{\Set}[1]{\mathbb{#1}}
\newcommand{\R}{\mathbb{R}}
\newcommand{\N}{\mathbb{N}}
\newcommand{\I}{\Set{I}}
\newcommand{\J}{\Set{J}}
\newcommand{\calC}{\mathcal{C}}
\newcommand{\calR}{\mathcal{R}}
\newcommand{\calD}{\mathcal{D}}
\newcommand{\diver}{\nabla\cdot}
\newcommand{\grad}{\nabla}
\newcommand{\dt}{\hspace{0.2em} \mathrm{d}t}
\newcommand{\id}{\mathsf{Id}}
\newcommand{\lapl}{\Delta}
\newcommand{\dontshow}[1]{ }
\numberwithin{equation}{section}
\newcommand{\rmd}{\mathrm{d}}
\newcommand{\rme}{\mathrm{e}}
\newcommand{\dbydab}[3]{\frac{\partial^{2} {#1}}{\partial {#2} \partial {#3}}}
\newtheorem{theorem}{Theorem}[section]
\newtheorem{lemma}[theorem]{Lemma}
\newtheorem{definition}[theorem]{Definition}
\newtheorem {remark}[theorem]{Remark}
\newtheorem {conjecture}[theorem]{Conjecture}
\title[Networks for Nonlinear Diffusion]{Networks for Nonlinear Diffusion Problems in Imaging}
\author{S. Arridge and A. Hauptmann}
\date{\today}
\thanks{S. Arridge and A. Hauptmann are with the Department of Computer Science; University College London, London, United Kingdom}
\begin{document}
\maketitle

\begin{abstract}
A multitude of imaging and vision tasks have seen recently a 
major transformation by deep learning methods and in particular 
by the application of convolutional neural networks. These 
methods achieve impressive results, even for applications where 
it is not apparent that convolutions are suited to capture the 
underlying physics. 

In this work we develop a network architecture based on nonlinear diffusion processes, named \emph{DiffNet}. By design, we obtain a nonlinear network 
architecture that is well suited for diffusion related problems 
in imaging. Furthermore, the performed updates are explicit, by which we obtain better interpretability and generalisability 
compared to classical convolutional neural network architectures. The performance of DiffNet tested on the inverse problem of 
nonlinear diffusion with the Perona-Malik filter on the STL-10 
image dataset. We obtain competitive results to the established 
U-Net architecture, with a fraction of parameters and necessary 
training data.
\end{abstract}

\section{Introduction}
We are currently undergoing a paradigm shift in imaging and 
vision tasks from classical analytic to learning and data based 
methods. In particular by deep learning  and the application of 
convolutional neural networks (CNN). Whereas highly superior 
results are obtained, interpretability and analysis of the 
involved processes is a challenging and ongoing task \cite{hofer2017,kim2018}.

In a general setting, Deep Learning proposes to develop a non-linear mapping $A_{\Theta} : X\to Y $ between elements of two spaces $X$, $Y$ (which may be the same) parametrised by a finite set of parameters $\Theta$, which need to be learned:
\begin{equation}
g = A_{\Theta} f
\label{eq:generalDL1}
\end{equation}
This learning based approach is in marked contrast to classical methods with a physical interpretation of the process \eqref{eq:generalDL1} for both computational modelling of data given a physical model (which we call a \emph{forward problem}), and the estimation of parameters of a physical model from measured, usually noisy, data (which we call an \emph{inverse problem}). Several recent papers have discussed the application of Deep Learning in both forward \cite{khoo2017,raissi2017,Sirignano2017,Tompson2017,Weinen2017,Wu2018} and inverse \cite{Jin2017,Kang2017,MeinhardtICCV2017,ye2018,Zhu2018}
problems. Several questions become essential when applying learned models to both forward and inverse problems including:
\begin{itemize}
    \item how and to what extent can learned models replace physical models?
    \item how do learned models depend on training protocols and how well do they generalise?
    \item what are appropriate architectures for the learned models, what is the size of the parameter set $\Theta$ that needs to be learned, and how can these be interpreted?
\end{itemize}
In this paper we aim to answer some of these questions, by taking a few steps back and looking at an analytic motivation for network architectures. Here we consider in particular mappings between images $u$ in dimension $d$, i.e. $X = Y = L^p(\Omega \subset \R^d)$, for which there exist several widely 
used linear and nonlinear mappings $A$ defined by differential 
and/or integral operators. For example a general linear integral 
transform such as
\begin{equation}
u^{\rm obs}(x) = (A u^{\rm true})(x) = \int_{\Omega} K(x,y) u^{\rm true}(y) \rmd y    
\label{eq:LinearIntegralTransform}
\end{equation}
includes stationary convolution as a special case if the kernel is translation invariant, i.e. $K(x,y) \equiv K(x-y)$. If the kernel depends on the image, i.e. $K(x,y) \equiv K(x,y; u)$ then
\eqref{eq:LinearIntegralTransform} becomes nonlinear. Alternatively, the forward model may be modelled as the end-point of an evolution process
which becomes nonlinear if the kernel depends on the image state
$K(x,y,t) \equiv K(x,y;u(t))$; see eq.(\ref{eq:AnisoGreensConvSoln}) below for a specific example. 


Furthermore, a widely studied class of image mappings is
characterised by defining the evolution through a partial 
differential equation (PDE) where flow is generated by the local 
structure of $u$ \cite{kimmel2003,Sapiro2006,Weickert1998}, such that
\begin{equation}\label{eqn:evolutionIntro}
u_t = F\left(u,\nabla u, \dbydab{u}{x_i}{x_j},\ldots \right) \, .
\end{equation}
These problems in general do not admit an explicit integral transform representation and are solved instead by numerical techniques, but since they depend on a physical model, described by the underlying PDE, they can be analysed and understood 
thoroughly \cite{weickert1998paper}. This also includes a statistical interpretation \cite{helin2010hierarchical} as hyperpriors in a Bayesian setting \cite{calvetti2008a}. Furthermore, models like \eqref{eqn:evolutionIntro} can be used to develop priors for nonlinear inverse problems, such as optical tomography \cite{douri2005a,hannukainen2015} and electrical impedance tomography \cite{Hamilton2014}.

Motivated by the success of these analytic methods to imaging 
problems in the past, we propose to combine physical models with 
data driven methods to formulate network architectures for 
solving both forward and inverse problems that take the 
underlying physics into account. We limit ourselves to the case 
where the physical model is of diffusion type, although more 
general models could be considered in the future. The leading 
incentive is given by the observation that the underlying 
processes in a neural network do not need to be limited to 
convolutions.

Similar ideas of combining partial differential equations with 
deep learning have been considered earlier. For instance learning coefficients of a PDE via optimal control \cite{liu2010}, as well as deriving CNN architectures motivated by diffusion processes \cite{chen2015}, deriving stable architectures by drawing 
connections to ordinary differential equations \cite{haber2017} 
and constraining CNNs \cite{ruthotto2018} by the interpretation 
as a partial differential equation. Another interpretation of our approach can be seen as introducing the imaging model into the 
network architecture; such approaches have led to a major 
improvement in reconstruction quality for tomographic problems \cite{Adler2017,hammernik2018,Hauptmann2018TMI}.

This paper is structured as following. In section \ref{sec:Diff4imaging} we review some theoretical aspects of 
diffusion processes for imaging and the inversion based on theory of partial differential equations and differential operators. We 
formulate the underlying conjecture for our network architecture, that the diffusion process can be inverted by a set of local 
non-stationary filters. In the following we introduce the notion 
of \emph{continuum networks} in section \ref{sec:contNet} and 
formally define the underlying \emph{layer operator} needed to formulate network architectures in a continuum setting. We draw connections to the established convolutional neural networks in our continuum setting. We then proceed to define the proposed layer operators 
for diffusion networks in section \ref{sect:DiffNet} and derive 
an implementable architecture by discretising the involved 
differential operator. In particular we derive a network 
architecture that is capable of reproducing inverse filtering 
with regularisation for the inversion of nonlinear diffusion 
processes. We examine the reconstruction quality of the proposed 
\emph{DiffNet} in the following section \ref{sec:compExp} for an 
illustrative example of deconvolution and the challenging 
inverse problem of inverting nonlinear diffusion with the 
Perona-Malik filter. We achieve results that are competitive to 
popular CNN architectures with a fraction of the amount of  
parameters and training data. Furthermore, all computed 
components that are 
involved in the update process are interpretable and can be 
analysed empirically. In section \ref{sec:discussion} we examine 
the generalisablity of the proposed network with respect to 
necessary training data. Additionally, we empirically analyse the
obtained filters and test our underlying conjecture. Section \ref{sec:conclusions} presents some conclusions and further ideas.



\section{Diffusion and flow processes for imaging}\label{sec:Diff4imaging}
In the following we want to explore the possibility to include a 
nonlinear process as an underlying model for network 
architectures.
Specifically, the motivation for this study is given by diffusion processes that have been widely used in imaging and vision. 
Let us consider here the general diffusion process in $\R^d$;
then on a fixed time interval with some diffusivity $\gamma$, to be defined later, we have
\begin{eqnarray}
\left\{ 
\begin{array}{rll}
\partial_t u  \hspace{-0.5em} &=\ \diver(\gamma \nabla u) & \text{in }\R^d\times (0,T] \vspace{0.1cm} \\
u(x,0) \hspace{-0.5em} &=\ u_0(x) & \text{in } \R^d.
\end{array}
\right.\label{eqn:diffusionEquation}
\end{eqnarray}
\begin{remark}
When considering bounded domains $\Omega \subset \R^d$ we will augment \eqref{eqn:diffusionEquation} with boundary conditions on $\partial \Omega$. We return to this point in section~\ref{sect:DiffNet}.
\end{remark}
In the following we denote the spatial derivative by
\begin{equation}\label{eqn:spatDiffOp}
\left(\mathcal{L}(\gamma)u\right)(x,t):= \diver(\gamma \nabla u(x,t)).
\end{equation}
Let us first consider the \emph{isotropic diffusion} case, then the differential operator becomes the spatial Laplacian $\mathcal{L}(\gamma = 1)= \lapl$.
In this case, the solution of \eqref{eqn:diffusionEquation} at time $T$ is given by convolution with a \emph{Green's function} 
\begin{equation}
u_T(x) = G_{\sqrt{2T}}(x)\ast u_0(x)
\label{eq:IsoGreensConvSoln}
\end{equation}
where $ G_{\sqrt{2T}} = \frac{1}{(4\pi T)^{d/2}}\exp\left[-\frac{x^2}{4T}\right]$ in dimension $d$ and we recall that the convolution of two functions $f,g\in L^1(\R^d)$ is defined by
\begin{equation}
(g\ast f)(x) = \int_{\R^d} g(x-y)f(y) \rmd y.
\label{eq:conv2func}
\end{equation}
\begin{definition} The  \emph{Green's operator} is defined with the Green's function as its kernel
\begin{equation}
\mathcal{G}_{\frac{1}{2}\sigma^2} u := \int_{\mathbb{R}^d} \frac{u(x)}{(2\pi \sigma^2)^{d/2}}\exp\left[-\frac{|x-y|^2}{2\sigma^2}\right]\rmd y    
\end{equation}
by which we have
\[
u_T(x) = \mathcal{G}_{\rm T} u_0 (x)
\]
\end{definition}

In the general case for an \emph{anisotropic diffusion flow} (ADF) we are interested in a scalar \emph{diffusivity} $\gamma \in [0,1]$ that depends on $u$ itself, i.e. 
\begin{equation}
\partial_t u= \nabla\cdot\left(\gamma(u) \nabla u\right)
\label{eq:AnisoDiff1}
\end{equation}

This is now an example of 
a non-linear evolution
\begin{equation}
u_T(x) = \mathcal{K}_T u_0 = \int_0^T\int_{\mathbb{R}^d} K^{\rm ADF}(x,y,u(y,t) ) u_0(y) \rmd y \rmd t
\label{eq:AnisoGreensConvSoln}
\end{equation}
where $K^{\rm ADF}(x,y,u(x,t))$ is now a non-stationary, non-linear and time-dependent kernel. 
In general there is no explicit expression for $K^{\rm ADF}$ and numerical methods are required for the solution of \eqref{eq:AnisoDiff1}.

\begin{remark}
Not considered here, but a possible extension to 
 \eqref{eq:AnisoDiff1} is where $\gamma$ is a tensor, which, for $d=2$ takes the form
\begin{equation*}
\partial_t u = 
\nabla\cdot\begin{pmatrix}\gamma_{11} & \gamma_{12} \\ \gamma_{12} & \gamma_{22} \end{pmatrix} \nabla u
\label{eq:TensorAnisoDiff1}
\end{equation*}
Furthermore, extensions 
exist for the case where $u$ is vector or tensor-valued. We do not consider these cases here, see \cite{Weickert1998} for an overview.
\end{remark}

\subsection{Forward Solvers}
First of all, let us establish a process between two states of the function $u$. Integrating over time from $t=t_0$ to $t = t_1=t_0+\delta t$ yields
\[
\int_{t_0}^{t_1} \partial_t u(x,t) \dt = \int_{t_0}^{t_1}  \left(\mathcal{L}(\gamma)u\right)(x,t)  \dt.
\]
Note, that the left hand side can be expressed as $\int_{t_0}^{t_1} \partial_t u(x,t) \dt = u(x,t_1)-u(x,t_0)$ and we denote the right hand side by an integral operator $\mathcal{A}_{\delta t}(\gamma)$, such that
\begin{equation}\label{eqn:diffIntegral}
(\mathcal{A}_{\delta t}(\gamma)u)(x,t_0) := \int_{t_0}^{t_1=t_0+\delta t} \left(\mathcal{L}(\gamma)u\right)(x,t)  \dt.
\end{equation}
In the following we denote the solution of \eqref{eqn:diffusionEquation} at time instances $t_n$ as $u^{(n)}=u(x,t_n)$. Then we can establish a relation between two time instances of $u$ by
\begin{equation}\label{eqn:diffUpdate}
u^{(n+1)} = (Id + \mathcal{A}_{\delta t}(\gamma) )u^{(n)}=   u^{(n)} + \int_{t_n}^{t_{n+1}} \mathcal{L}(\gamma)u(x,t) \dt,
\end{equation}
where $Id$ denotes the identity and  $t_{n+1}=t_n+\delta t$.

Since we cannot compute $u^{(n+1)}$ by \eqref{eqn:diffUpdate} without the explicit knowledge of the (possibly time-dependent) diffusivity $\gamma$, it is helpful to rather consider a fixed diffusivity at each time instance $\gamma^{(n)}=\gamma(x,t=t_n)$, or $\gamma^{(n)}=\gamma(u(x,t=t_n))$ in the nonlinear case; then by using the differential operator \eqref{eqn:spatDiffOp} we have an approximation of \eqref{eqn:diffIntegral} by
$$
\delta t \mathcal{L}(\gamma^{(n)}) u^{(n)}= \delta t (\diver \gamma^{(n)} \nabla u^{(n)})  \approx \mathcal{A}_{\delta t}(\gamma) u^{(n)}.
$$
We can now solve \eqref{eq:AnisoDiff1} approximately by iterating for time steps $\delta t$ using either an \emph{explicit} scheme 
\begin{equation}
\calD_{\delta t}^{{\rm Expl}}(\gamma^{(n)})u^{(n)} = \left(Id + \delta t {\mathcal{L}}(\gamma^{(n)}) \right) u^{(n)}\,,
\label{eq:Explicit1}
\end{equation}
or an \emph{implicit} scheme
\begin{equation}
\calD_{\delta t}^{{\rm Impl}}(\gamma^{(n)})u^{(n)} = \left(Id - \delta t {\mathcal{L}}(\gamma^{(n)}) \right)^{-1} u^{(n)}\,,
\label{eq:Implicit1}
\end{equation}
Whereas \eqref{eq:Explicit1} is stable only if CFL conditions are satisfied and \eqref{eq:Implicit1} is unconditionally stable, they are both only accurate for sufficiently small steps $\delta t$. In fact, by the Neumann series, the schemes are equivalent in the limit
\begin{equation}
\underset{\delta t \rightarrow 0}{\lim}\left(Id - \delta t {\mathcal{L}}(\gamma) \right)^{-1} = Id + \delta t {\mathcal{L}}(\gamma)  + \mathcal{O}((\delta t)^2)\, 
\end{equation}
and coincide with the integral formulation of \eqref{eqn:diffUpdate}.

It's also useful to look at the Green's functions solutions. 
\begin{lemma}\label{lemma:IteratedGreensFunction}
Consider the isotropic case with $\gamma\equiv 1$. Then we may write, with time steps $\delta t = T/N$, 
\begin{equation}
\mathcal{G}_Tu_0 = G_{\sqrt{2 T}} \ast u_0 = \underbrace{G_{\sqrt{2 \delta t}} \ast \ldots \ast G_{\sqrt{2 \delta t}}}_{\text{$N$-times}} \ast u_0 = 
\underbrace{\mathcal{G}_{\delta t}\circ\ldots\mathcal{G}_{\delta t}}_{\text{$N$-times} } \circ u_0
\label{eq:IteratedGreensFunction}
\end{equation}
\end{lemma}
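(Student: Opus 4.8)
The plan is to reduce the entire statement to the \emph{semigroup property} of Gaussian convolution and then iterate it. Everything rests on the single identity
\begin{equation}\label{eqn:semigroup}
G_{\sigma_1} \ast G_{\sigma_2} = G_{\sqrt{\sigma_1^2 + \sigma_2^2}},
\end{equation}
valid for any $\sigma_1,\sigma_2 > 0$ in any dimension $d$. Once \eqref{eqn:semigroup} is available, the first two equalities in \eqref{eq:IteratedGreensFunction} follow by a trivial induction on the number of factors: each factor $G_{\sqrt{2\delta t}}$ carries variance $\sigma_k^2 = 2\delta t$, and convolving adds variances, so the $N$-fold convolution has total variance $2N\delta t = 2T$ because $\delta t = T/N$; hence it equals $G_{\sqrt{2T}}$, the kernel of $\mathcal{G}_T$.

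To establish \eqref{eqn:semigroup} I would pass to the Fourier domain, where convolution turns into multiplication. With the convention $\widehat{f}(\xi) = \int_{\R^d} f(x)\,\rme^{-\mathrm{i}\,x\cdot\xi}\,\rmd x$ one has $\widehat{G_\sigma}(\xi) = \exp(-\tfrac{1}{2}\sigma^2|\xi|^2)$, and therefore
\begin{equation}
\widehat{G_{\sigma_1}\ast G_{\sigma_2}}(\xi) = \widehat{G_{\sigma_1}}(\xi)\,\widehat{G_{\sigma_2}}(\xi) = \exp\!\left(-\tfrac{1}{2}(\sigma_1^2+\sigma_2^2)|\xi|^2\right) = \widehat{G_{\sqrt{\sigma_1^2+\sigma_2^2}}}(\xi),
\end{equation}
so \eqref{eqn:semigroup} follows by injectivity of the Fourier transform. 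Alternatively one can verify \eqref{eqn:semigroup} directly from the definition \eqref{eq:conv2func} by completing the square in the combined Gaussian exponent; this is elementary but involves the bookkeeping of a $d$-dimensional Gaussian integral, so the Fourier route is the cleaner one.

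Finally, for the last equality in \eqref{eq:IteratedGreensFunction} I would use that each Green's operator $\mathcal{G}_{\delta t}$ acts by convolution with $G_{\sqrt{2\delta t}}$, together with the associativity of convolution on $L^1(\R^d)$: the $N$-fold composition $\mathcal{G}_{\delta t}\circ\cdots\circ\mathcal{G}_{\delta t}$ applied to $u_0$ is precisely the convolution of $u_0$ with the $N$-fold convolution of $G_{\sqrt{2\delta t}}$, which we have just identified as $G_{\sqrt{2T}}$. The main (indeed the only nontrivial) obstacle is proving \eqref{eqn:semigroup}; the rest is associativity and induction. The one technical point worth checking is that all Gaussians lie in $L^1(\R^d)$ and that $u_0$ belongs to a space (e.g.\ $L^1$ or $L^2$) in which the convolutions are well defined, so that Fubini's theorem and the reordering of convolution factors are justified.
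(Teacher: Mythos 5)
Your proof is correct and takes essentially the same route as the paper: both pass to the Fourier domain, where $\widehat{G_{\sigma}}(k) = \rme^{-\sigma^2 k^2/2}$ and the convolution theorem reduces the claim to adding exponents, so that $\bigl(\rme^{-k^2\delta t}\bigr)^N = \rme^{-k^2 T}$. The only difference is organizational --- you isolate the two-factor semigroup identity and induct, while the paper multiplies all $N$ Fourier factors at once --- and your added remarks on associativity and integrability are fine but not a change of method.
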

\begin{proof}
Take the Fourier Transform\footnote{Note the definition of Fourier Transform is chosen to give the correct normalisation so that $\hat{u}(k)|_{k=0} = \int_{{\mathrm{R}}^d} u(x) \rmd^n x $} 
 $\hat{G}_{\sigma}(k) =  {\mathcal{F}}_{x\rightarrow k} G_{\sigma}(x) =  \rme^{-\frac{\sigma^2 k^2}{2}}$ and use the convolution theorem to give
\begin{eqnarray*}
\hat{G}_{\sqrt{2 T}}(k)  \hat{u}_0(k)  &=&\left( \Pi_{n=1}^N\hat{G}_{\sqrt{2 \delta t}}(k) \right) \hat{u}_0(k) \\
\rme^{-{k^2T} }  \hat{u}_0(k) &= & \left(\rme^{-{k^2 \delta t} }  \right)^N \hat{u}_0(k)   = \rme^{-{k^2 N\delta t} }  \hat{u}_0(k)
\end{eqnarray*}
\end{proof}

\noindent
Let us also note that in Fourier domain, by Taylor series expansion we have  
\[
\exp(-k^2 \delta t) \rightarrow 1 - k^2\delta t  + \frac{1}{2} k^4 (\delta t)^2 - \ldots \, ,
\]
and therefore in the spatial domain the finite difference step and the Gaussian convolution step are the same
\begin{equation}
\lim_{\delta t \rightarrow 0}\left(G_{\sqrt{2 \delta t} }\ast u_0 \right) = \left(Id + \delta t \lapl + \mathcal{O}((\delta t)^2)\right)\ast u_0  = \lim_{\delta t \rightarrow 0}\left(Id - \delta  t \lapl \right)^{-1}u_0 \,.
\end{equation}

\subsection{Inverse Filtering}
Let us now consider the inverse problem of reversing the diffusion process. That is we have $u_T$ and aim to recover the initial condition $u_0$. This is a typical ill-posed problem as we discuss in the following.

\subsubsection{Isotropic case  $\gamma \equiv 1$}
As the forward problem is represented as convolution in the spatial domain, the inverse mapping $u_T \mapsto u_0$ is a (stationary) \emph{deconvolution}. We remind that $\hat{u}_T=\rme^{-{k^2 T} }  \hat{u}_0(k)$, then the inversion is formally given by division in the Fourier domain as
\begin{equation}
u_0(x) = {\mathcal{F}}^{-1}_{k\rightarrow x}\left[\hat{u}_T(k) \rme^{k^2 T} \right]\,.
\label{eq:FourierDeconv}
\end{equation}
However we note: 
\begin{itemize}
    \item[i.)] The factor $\rme^{k^2 T} $ is unbounded and hence the equivalent convolution kernel in the spatial domain does not exist.
    \item[ii.)] (\ref{eq:FourierDeconv}) is unstable in the presence of even a small amount of additive noise and hence it has to be \emph{regularised} in practice.
\end{itemize}

Nevertheless, let's consider formally with $\rme^{k^2  T} = \left(\rme^{k^2 \delta t} \right)^N$ that by Taylor series we get 
\[
{\mathcal{F}}^{-1}_{k\rightarrow x}\rme^{k^2 \delta t} \approx {\mathcal{F}}^{-1}_{k\rightarrow x}\left[ 1 +  k^2\delta t + (k^2\delta t)^2 + \dots   \right]
= 1 - \delta t  \lapl +  \mathcal{O}((\delta t)^2)\,.
\] 
Motivated by this we define an operator for the inversion process
\begin{equation}
\mathcal{E}^{\rm iso}_{\delta t} u := \left(Id - \delta t \lapl \right)u  \, \simeq \mathcal{G}^{-1}_{\delta t} u.
\end{equation}
Clearly $\mathcal{E}_{\delta t}^{\rm iso}$ coincides with the inverse of the implicit update in \eqref{eq:Implicit1}, and 
\begin{equation}
\tilde{u}_{0} =  \underbrace{{\mathcal{E}}^{\rm iso}_{\delta t} \circ \ldots \circ {\mathcal{E}}^{\rm iso}_{ \delta t}}_{N-\textnormal{times}} \circ\, u_T
\label{eq:IteratedInvGreensFunction}
\end{equation}
is an estimate for the deconvolution problem which (in the absence of  noise) is correct in the limit
\begin{equation}
\lim_{\delta t \rightarrow 0} \tilde{u}_{0} \rightarrow u_0.
\end{equation}

\subsubsection{Anisotropic case}
In this case the diffused function is given by 
\eqref{eq:AnisoGreensConvSoln}. 
Following 
Lemma~\ref{lemma:IteratedGreensFunction}
we may put 
\begin{equation}
u_T = \mathcal{K}_T u_0 \quad \simeq \quad \tilde{u}_T :=  \calD_{\delta t}^{{\rm Expl}}(\gamma^{(N-1)})  \circ \ldots\circ \calD_{\delta t}^{{\rm Expl}}(\gamma^{(0)})   u_0 
\label{eq:AnisoIteratedGreens}
\end{equation}
and we also have
\begin{equation}
\lim_{\delta t \rightarrow 0}\tilde{u}_{T} \rightarrow u_T.
\end{equation}
\noindent

\begin{conjecture}
There exists a set of local (non-stationary) filters $\mathcal{E}_{\delta t} (\zeta)$ where
\begin{equation}\label{eqn:nonstatFilter}
\mathcal{E}_{\delta t} (\zeta) u = u - \delta t\int_{{\mathbb{R}}^d} \zeta(x,y) u(y) \rmd y
\end{equation}
and where $\zeta(x,y)$ has only local support and such that
\begin{equation}
u_0 = \mathcal{K}_T^{-1}u_T \quad \simeq \quad  \tilde{u}_0:= \mathcal{E}_{\delta t} ( \zeta^{(N-1)})\circ 
\ldots\circ \mathcal{E}_{\delta t} ( \zeta^{(0)}) u_T \,.
\label{eq:AnisoIteratedInvGreens}
\end{equation}
\end{conjecture}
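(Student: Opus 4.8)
The plan is to construct the local filters directly from the explicit forward scheme \eqref{eq:AnisoIteratedGreens}, using that the exact inverse of a composition is the reverse composition of the single-step inverses. Since $\calD_{\delta t}^{\rm Expl}(\gamma^{(n)}) = Id + \delta t\,\mathcal{L}(\gamma^{(n)})$, the formal inverse of \eqref{eq:AnisoIteratedGreens} is
\[
u_0 = \left(Id + \delta t\,\mathcal{L}(\gamma^{(0)})\right)^{-1}\!\circ\cdots\circ\left(Id + \delta t\,\mathcal{L}(\gamma^{(N-1)})\right)^{-1} u_T .
\]
First I would expand each single-step inverse by its Neumann series, valid once $\delta t$ is small enough that $\|\delta t\,\mathcal{L}(\gamma^{(n)})\| < 1$ in a suitable operator norm, giving $\left(Id + \delta t\,\mathcal{L}(\gamma^{(n)})\right)^{-1} = Id - \delta t\,\mathcal{L}(\gamma^{(n)}) + \mathcal{O}((\delta t)^2)$. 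Truncating at first order singles out the candidate $\mathcal{E}_{\delta t}(\zeta^{(n)}) := Id - \delta t\,\mathcal{L}(\gamma^{(n)})$, i.e. exactly the inverse-filtering operator already used in the isotropic case; after relabelling the superscript to run with the composition order this is the ansatz \eqref{eq:AnisoIteratedInvGreens}.

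Second, I would verify that this candidate has the required form \eqref{eqn:nonstatFilter} with a locally supported kernel $\zeta$. Because $\mathcal{L}(\gamma)u = \diver(\gamma\grad u)$ is a second-order differential operator, its Schwartz kernel is a distribution supported on the diagonal $\{x=y\}$, assembled from $\gamma^{(n)}$, $\grad\gamma^{(n)}$ and derivatives of $\delta(x-y)$; this is the continuum meaning of ``only local support''. Passing to the finite-difference discretisation of $\mathcal{L}$, the diagonal distributional kernel becomes a compactly supported stencil coupling each grid point only to its nearest neighbours, so that $\mathcal{E}_{\delta t}(\zeta^{(n)})$ is genuinely a local non-stationary filter and $\zeta^{(n)}$ has finite support, as claimed.

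Third, I would control the accumulated error. Each first-order truncation contributes a defect of size $\mathcal{O}((\delta t)^2)$, and a telescoping estimate over the $N = T/\delta t$ compositions---propagating each defect through the operator norms of the remaining factors---yields a total error of order $N(\delta t)^2 = \mathcal{O}(T\,\delta t)$, provided those norms stay uniformly bounded. Under that proviso one obtains $\tilde u_0 \to u_0$ as $\delta t \to 0$, matching the isotropic limit \eqref{eq:IteratedInvGreensFunction} and justifying the approximate identity ``$\simeq$'' in \eqref{eq:AnisoIteratedInvGreens}.

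The main obstacle is the nonlinearity. In the anisotropic case $\gamma^{(n)} = \gamma(u(x,t_n))$ depends on the intermediate state $u^{(n)}$, which is unavailable when running the recursion backwards from $u_T$ alone: recovering $u^{(N-1)}$ from $u_T$ already requires $\gamma^{(N-1)} = \gamma(u^{(N-1)})$, the diffusivity of the very state being sought. The backward pass must therefore reconstruct the diffusivities along the unknown trajectory, which I expect to need a fixed-point or self-consistency argument and has no closed form---this is precisely the information the learned filters $\zeta^{(n)}$ are intended to supply. Compounding this, reversing diffusion is ill-posed: the exact inverse amplifies high frequencies (the unbounded factor $\rme^{k^2 T}$ of the isotropic analysis), so the single-step operators $Id - \delta t\,\mathcal{L}$ have norms growing with frequency and the uniform bound used in the error estimate fails without a regularisation or band-limiting. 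Consequently the convergence $\tilde u_0 \to u_0$ can hold only in the noise-free, regularised regime, which is why the statement is a conjecture phrased as an approximate identity rather than an equality.
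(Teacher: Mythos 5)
The statement you are attempting is labelled a \emph{conjecture} in the paper, and the paper supplies no proof of it: the authors motivate it by the isotropic analysis (where $\mathcal{E}^{\rm iso}_{\delta t} = Id - \delta t\lapl$ arises from truncating the Taylor/Neumann expansion of the implicit update \eqref{eq:Implicit1}, equivalently of $\rme^{k^2\delta t}$ in Fourier domain) and then support it only empirically, through the DiffNet experiments and the observed decomposition of the learned filters into a mean-free part plus a smoothing part. Your derivation retraces exactly that motivating heuristic---invert each explicit forward factor of \eqref{eq:AnisoIteratedGreens}, expand by Neumann series, truncate at first order, and note that the stencil of a discretised second-order operator couples only neighbouring pixels---so you are on the paper's own route, and your closing discussion of why the argument cannot be completed identifies the very reasons the authors state this as a conjecture rather than a theorem.

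That said, viewed as a proof your argument has genuine gaps, and it is worth locating them precisely. First, the Neumann-series step is invalid as stated: $\mathcal{L}(\gamma) = \diver(\gamma\grad \cdot)$ is an unbounded operator, so $\|\delta t\,\mathcal{L}(\gamma^{(n)})\| < 1$ never holds on all of $L^p$, however small $\delta t$ is; in Fourier terms (isotropic case) the expansion $(1+\delta t k^2)^{-1} \approx 1 - \delta t k^2$ is accurate only on frequencies with $k^2\delta t \ll 1$, so the truncation must be restricted to a band-limited subspace, or to the discrete setting under a CFL-type condition $\delta t \lesssim h^2$, in which case $\delta t\to 0$ and the mesh size cannot be sent to zero independently. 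Second, and more fundamentally, the telescoping error estimate fails: propagating each $\mathcal{O}((\delta t)^2)$ defect through the remaining factors requires uniform bounds on products of the $\mathcal{E}_{\delta t}$'s, but these products have symbol $\approx (1+\delta t k^2)^N \approx \rme^{k^2 T}$---precisely the unbounded factor the paper flags below \eqref{eq:FourierDeconv}---so the total error is not $N\cdot\mathcal{O}((\delta t)^2)$ but is exponentially amplified in frequency; your own caveat concedes this, which means the convergence claim of your third step is not established even in the linear isotropic case without an explicit regularisation or band-limiting hypothesis. Finally, the nonlinearity is less of an obstruction than you suggest: since the conjecture asserts only \emph{existence} of the filters $\zeta^{(n)}$, you may define them along the true (unknown) forward trajectory; the inaccessibility of that trajectory from $u_T$ is what makes the filters worth \emph{learning}, but it does not by itself block an existence argument. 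The obstruction that does block it is the ill-posed error amplification above, and overcoming it (e.g., by proving that a smoothing correction $\mathsf{S}$ can absorb the amplified defects) is exactly what would be needed to turn the conjecture into a theorem.
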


\begin{remark}[Unsharp Masking]
We recall that a simple method for "deconvolution" is called \emph{Unsharp Masking} which is usually considered as
\begin{equation*}
u^{\rm obs} \mapsto \tilde{u} = u + \epsilon(u^{\rm obs}-G_{\sigma}\ast u^{\rm obs} )
\end{equation*}
for some blur value $\sigma$ and sufficiently small $\epsilon$. By similar methods as above, we find 
\begin{eqnarray*}
\hat {\tilde{u}  }(k) &=& \hat{u}^{\rm obs}(k) +\epsilon\left(Id - \rme^{-\frac{\sigma^2k^2}{2}} \right)\hat{u}^{\rm obs}(k) \simeq \left(Id  + {\frac{\epsilon \sigma^2k^2}{2}} \right)\hat{u}^{\rm obs}(k)\\
\Rightarrow \tilde{u}  &\simeq&  \left(Id  - \frac{\epsilon \sigma^2}{2}  \lapl  \right)u^{\rm obs}(x)\, .
\end{eqnarray*}
We may choose to interpret the operators $ {\mathcal{E}}_{\delta t}(\zeta) $ as a kind of "non-stationary unsharp masking".
\end{remark}

\subsection{Discretisation}\label{sec:PDEdiscretisation}

We introduce the definition of a sparse matrix operator representing local non-stationary convolution
\begin{definition}
$\mathsf{W}$ is called a \emph{Sparse Sub-Diagonal} (SSD) matrix if its non-zero entries are all on sub-diagonals corresponding to the local neighbourhood 
of pixels on its diagonal. 
\end{definition}
Furthermore we are going to consider that a class of SSD matrices $\mathsf{W}(\zeta)$ with learned parameters $\zeta$ can be decomposed as $\mathsf{W}(\zeta) = \mathsf{S}(\zeta) + \mathsf{L}(\zeta)$ where $\mathsf{S}$ is \emph{smoothing} and $\mathsf{L}(\zeta)$ is \emph{zero-mean}; i.e. $\mathsf{L}(\zeta)$ has one zero eigenvalue such that its application to a constant image gives a zero valued image 
\[
\mathsf{L}(\zeta) \mathds{1} = 0
\]

In the following we restrict ourselves to the typical 4-connected neighbourhood of pixels in dimension $d=2$.
For the numerical implementation we have the Laplacian stencil 
$$\lapl \quad \rightarrow \quad \mathsf{L}_{\lapl} = \begin{pmatrix} &1&\\1&-4&1\\ &1& \end{pmatrix} $$ 
from which we have that $\mathsf{L}_{\lapl}$ is zero-mean. 
Similarly we will have for the numerical approximation of  ${\mathcal{E}}^{\rm iso}_{\delta t}$ the matrix operator
$$ Id - \delta t \lapl \quad \rightarrow \quad {\mathsf{E}}^{\rm iso}_{\delta t} = 
\begin{pmatrix} & 0 &\\ 0 & 1  & 0 \\ & 0 & \end{pmatrix} -
\begin{pmatrix} &\delta t&\\\delta t&-4\delta t&\delta t\\ &\delta t& \end{pmatrix}. $$

Further we conjecture that in the numerical setting  $\mathcal{E}_{\delta t} (\zeta)$ is approximated by the sum of identity plus a SSD matrix operator 
as 
\begin{equation}\label{eqn:nonStatStencil}
 \mathcal{E}_{\delta t} (\zeta)  \,\sim\, {Id} - \delta t
 \mathcal{L}(\zeta) \rightarrow \mathsf{Id} -
 \mathsf{L}_{\delta t}(\zeta) \,\sim\,
\mathsf{E}_{\delta t} (\zeta) = \begin{pmatrix} & 0 &\\ 0 & 1  & 0 \\ & 0 & \end{pmatrix} - \delta t
\begin{pmatrix} &\zeta_1 &\\ \zeta_2&-\sum_i \zeta_i &\zeta_4 \\ & \zeta_3 & \end{pmatrix}. 
\end{equation}
In the presence of noise, the inverse operator $\mathcal{E}_{\delta t} (\zeta)$ can be explicitly regularised by addition of a smoothing operation 
\begin{equation}
\tilde{\mathcal{E}}_{\delta t} (\zeta) = \mathcal{E}_{\delta t} (\zeta) +\alpha \mathcal{S} \rightarrow \mathsf{Id} - 
 \mathsf{L}_{\delta t}(\zeta) + \alpha \mathsf{S} =: \mathsf{Id} - \mathsf{W}_{\delta t}(\zeta)
\end{equation}

Whereas in classical approaches to inverse filtering the
regularisation operator would be defined \emph{a priori}, 
the approach in this paper is to learn the operator $\mathsf{W}$
and interpret it as the sum of a differentiating operator
$\mathsf{L}$ and a (learned) regulariser $\mathsf{S}$. This is 
discussed further in section~\ref{sect:DiffNet} below

\section{Continuum Networks}\label{sec:contNet}
Motivated by the previous section, we aim to build network architectures based on diffusion processes. 
We first discuss the notion of (neural) networks in a continuum 
setting for which we introduce the concept of a \emph{continuum 
network} as a mapping between function spaces. That is, given a 
function on a bounded domain $\Omega\subset\R^d$ with $f\in L^p(\Omega)$, we are interested in finding a non-linear 
parametrised operator $\mathcal{H}_\Theta:L^p(\Omega)\to L^p(\Omega)$ acting on the function $f$. We will consider in the 
following the case $p\in\{1,2\}$; extensions to other spaces 
depend on the involved operations and will be the subject 
of future studies.

We will proceed by defining the essential building blocks of a 
continuum network and thence to discuss specific choices to 
obtain a continuum version of the most common convolutional 
neural networks. Based on this we will then introduce our 
proposed architecture as a diffusion network in the next chapter.

\subsection{Formulating a continuum network}
The essential building blocks of a deep neural network are 
obviously the several layers of neurons, but since these have a 
specific notion in classical neural networks, see for instance 
\cite{shalev2014}, we will not use the term of neurons to avoid 
confusion. We rather introduce the concept of layers and channels as the building blocks of a \emph{continuum network}. In this 
construction, each layer consists of a set of functions on a 
product space and each function represents a channel.
\begin{definition}[Layer and channels]
For $k\in \N_0$, let $F_k=\{f^k_1,f^k_2,\cdots, f^k_I\}$ be a set of functions $f^k_i\in L^p(\Omega)$ for $i\in\I=\{1,\dots,I\}$, 
$I\geq 1$. Then we call:  $F_k$ the layer $k$ with $I$ channels 
and corresponding index set $\I$. 
\end{definition}

The continuum network is then built by defining a relation or 
operation between layers. In the most general sense we define the concept of a layer operator for this task. 
\begin{definition}[Layer operator]
Given two layers $F_k$ and $F_{t}$, $k\neq t$, with channel index set $\I,\ \J$, respectively, we call the mapping $\mathcal{H}:\bigotimes_\I L^p(\Omega) \to \bigotimes_\J L^p(\Omega)$ with 
\[
\mathcal{H}F_{k} = F_t
\]
a layer operator. If the layer operator depends on a set of 
parameters $\Theta$, then we write $\mathcal{H}_{\Theta}$.
\end{definition}
We note that for simplicity, we will not index the set of 
parameters, i.e $\Theta$ generally stands for both all involved 
parameters of each layer separately, or the whole network.
The classical structure for layer operators follows the principle of affine linear transformations followed by a nonlinear 
operation. Ideally the affine linear transformation should be 
parameterisable by a few parameters, whereas the nonlinear 
operation is often fixed and acts pointwise. A popular choice is 
the maximum operator also called the ``Rectified Linear Unit'':
\[
\mathrm{ReLU}:L^p(\Omega)\to L^p(\Omega), \hspace{0.25 cm} f\mapsto \max(f,0).
\]

The continuum network is then given by the composition of all 
involved layer functions. For example in monochromatic imaging 
applications we typically have an input image $f_0$ and a desired output $f_K$ with several layer functions inbetween, that perform a specific task such as denoising or sharpening. In this case the input and output consists of one channel, i.e. $|F_0|=|F_K|=1$; 
consequently for colour images (in RGB) we have $|F_0|=|F_K|=3$.  

\subsection{Continuum convolutional networks}\label{sec:contCNN}
Let us now proceed to discuss a specific choice for the layer 
operator, namely convolutions. Such that we will obtain a 
continuum version of the widely used convolutional neural 
networks, which we will call here a \emph{continuum convolutional network}, to avoid confusion with the established convolutional 
neural networks (CNN). We note that similar ideas have been addressed as well in \cite{Adler2017}.

Let us further consider linearly ordered network architectures, 
that means each layer operator maps between consecutive layers. 
The essential layer operator for a continuum convolutional 
network is then given by the following definition.
\begin{definition}[Convolutional layer operator] 
Given two layers $F_{k-1}$ and $F_{k}$ with channel index set $\I,\ \J$, respectively. Let $b_j\in\R$ and  $\omega_{i,j}\in L^p(\Omega)$, with compact support in $\Omega$, be the layer operator's parameters for all $i\in\I,\ j\in \J$. We call $\calC^{(k)}_{\Theta,\varphi}$ the convolutional layer operator for layer $k$, if for each output channel
\begin{equation}\label{eqn:convLayer}
\calC^{(k)}_{\Theta,\varphi}F_{k-1}=\varphi\left[b_j + \sum_{i\in \mathcal{I}} \omega_{i,j} \ast f^{k-1}_{i} \right]=f^k_{j}, \ j \in \J,
\end{equation}
with a point-wise nonlinear operator $\varphi: L^p(\Omega)\to L^p(\Omega) $.
\end{definition}
If the layer operator does not include a nonlinearity, we write $\calC_{\Theta,Id}$. Now we can introduce the simplest 
convolutional network architecture by applying $K\geq 1$ 
convolutional layer operators consecutively. 

\begin{definition}[$K$-layer Continuum Convolutional Network]
Let $K\geq 1$, then we call the composition of $K$ convolutional 
layer operator, denoted by $\calC^K_\Theta$, a K-layer Continuum 
Convolutional Network, such that
\begin{equation}
\calC^K_{\Theta,\varphi} = {\calC^{(K)}_{\Theta,\varphi} 
\circ \dots \circ \calC^{(1)}_{\Theta,\varphi}}, \hspace{0.25 cm} \calC^K_\Theta F_0=F_K.
\end{equation}
\end{definition}
In the following we will also refer to a $K$-layer CNN as the 
practical implementation of a $K$-layer continuum convolutional 
network.
A popular network architecture that extends this simple idea is 
given by a resdiual network (ResNet) \cite{he2016}, that is based on the repetition of a 2-layer CNN with a residual connection, 
that consists of addition. That is, the network learns a series of additive updates to the input.
The underlying structure in ResNet is the repeated application of the following \emph{residual block}, given by
\begin{equation}
\calR_{\Theta,\varphi} = \calC^{(2)}_{\Theta,Id}\circ \calC^{(1)}_{\Theta,\varphi} + Id, \hspace{0.25 cm} \calR_{\Theta,\varphi}F_0=F_2.
\end{equation}
Note that the second convolutional layer does not include a 
nonlinearity. Furthermore, it is necessary that $|F_0|=|F_2|$, 
but typically it is often chosen such that $|F_0|=|F_1|=|F_2|$. 
The full continuum ResNet architecture can then be summarized as 
follows. Let $K\geq 1$, then the composition of $K$ residual 
blocks, denoted by $\calR^K_{\Theta,\varphi}$, defines a $K$-block continuum ResNet
\begin{equation}
\calR^K_{\Theta,\varphi} = \calC^{(K+1)}_{\Theta,\varphi} \circ \calR^{(K)}_{\Theta,\varphi} 
\circ \dots \circ \calR^{(1)}_{\Theta,\varphi}\circ \calC^{(0)}_{\Theta,\varphi}.
\end{equation}

Note that the two additional convolutional layers in the 
beginning and end are necessary to raise the cardinality of the 
input/output layer to the cardinality needed in the residual 
blocks. A complete K-block ResNet then consists of $2(K+1)$ 
layers. Note that in the original work \cite{he2016}, the 
network was primarily designed for an image classification task rather than image-to-image mapping that we consider here.


\section{DiffNet: discretisation and implementation
\label{sect:DiffNet}}

Next we want to establish a layer operator based on the diffusion processes discussed in chapter \ref{sec:Diff4imaging}. This means
that we now interpret the layers $F_k$ of the continuum network 
as time-states of the function $u: \Omega\times\R_+ \to \R$, 
where $u$ is a solution of the diffusion equation \eqref{eqn:diffusionEquation}. In the following we assume single 
channel networks, i.e. $|F_k|=1$ for all layers. Then we can 
associate each layer with the solution $u$ such that $F_k=u^{(k)}=u(x,t=t_k)$.
To build a network architecture based on the continuum setting,
we introduce the layer operator versions of \eqref{eq:Explicit1}, and \eqref{eqn:nonstatFilter}:
\begin{definition}[Diffusion and filtering layer operator] 
Given two layers $F_k$ and $F_{k-1}$, such that $F_k=u(x,t_k)$ and $F_{k-1}=u(x,t_{k-1})$, then a diffusion layer operator $\mathcal{D}_\Theta$, with parameters $\Theta=\{\gamma,\delta t\}$, is given by
\begin{equation}\label{eqn:diffLayerFunc}
\calD_\Theta F_{k-1} = \mathcal{D}_{\delta t}^{\rm Expl}(\gamma^{(k-1)})u^{(k-1)}  = (Id + \delta t \mathcal{L}(\gamma^{(k)})) u^{(k-1)} = F_k.
\end{equation}
Similarly, an inverse filtering layer operator, with parameters $\Theta=\{\zeta,\delta t \}$ is given by 
\begin{equation}\label{eqn:filtLayerFunc}
\mathcal{E}_\Theta F_{k-1} = \mathcal{E}_{\delta t}(\zeta^{(k-1)})u^{(k-1)}  =u^{(k-1)}- \delta t\int_{{\mathbb{R}}^d} \zeta(x,y) u^{(k-1)}(y) \rmd y =
F_k.
\end{equation}
\end{definition}
Note that this formulation includes a learnable time-step and 
hence the time instances that each layer represents changes. That also means that a stable step size is implicitly learned, if 
there are enough layers. In the following we discuss a few 
options on the implementation of the above layer operator, 
depending on the type of diffusivity.
\begin{remark}
The assumption of a single channel network, i.e. $|F_k|=1$ for 
all $k$, can be relaxed easily. Either by assuming $|F_k|=m$ for 
some $m \in \N$ and all layers, or by introducing a channel 
mixing as in the convolutional operator \eqref{eqn:convLayer}. 
\end{remark}

\subsection{Discretisation of a continuum network}
Let us briefly discuss some aspects on the discretisation of a 
continuum network; let us first start with affine linear 
networks, such as the convolutional networks discussed in section \ref{sec:contCNN}. Rather than discussing the computational 
implementation of a CNN, (see e.g. the comprehensive description in \cite{Goodfellow2016}), we concentrate instead on an algebraic matrix-vector formulation that serves our purposes.

For simplicity we concentrate on the two-dimensional $d=2$ case here. 
Let us then assume that the functions $f_i$ in each layer are 
represented as a square $n$-by-$n$ image and we denote the 
vectorised form as $\mathbf{f}\in \R^{n^2}$. Then any linear 
operation on $\mathbf{f}$ can be represented by some matrix $\mathsf{A}$; in particular we can represent convolutions as a 
matrix. 

We now proceed by rewriting the convolutional operation \eqref{eqn:convLayer} in matrix-vector notation. Given two layers $F_k$ and $F_{k-1}$ with channel index set $\I,\ \J$, 
respectively, then we can represent the input layer as vector $\mathbf{F}_{k-1}\in \R^{Jn^2}$ and similarly the output layer $\mathbf{F}_{k}\in \R^{In^2}$. Let $\mathsf{A}_i\in \R^{n^2\times Jn^2 }$ represent the sum of convolutions in \eqref{eqn:convLayer}, then we can write the layer operator in 
the discrete setting as matrix-vector operation  by
\[
\mathbf{f}_i^k=\varphi(b_i{\id}+\mathsf{A}_i \mathbf{F}_{k-1}),
\]
where $\id$ denotes the identity matrix of suitable size. 
Furthermore, following the above notation, we can introduce a 
stacked matrix $\mathsf{A}\in\R^{In^2\times Jn^2 }$ consisting of all $\mathsf{A}_i$ and a matrix $\mathsf{B}\in\R^{In^2\times Jn^2 }$ by stacking the diagonal matrices $b_i{\id}$. Then we can 
represent the whole convolutional layer in the discrete setting as
\begin{equation}\label{eqn:matrixVecCNN}
\mathbf{F}_k=\varphi(\mathsf{B}+\mathsf{A}\mathbf{F}_{k-1}).
\end{equation}
Now the parameters of each layer are contained in the two matrices $\mathsf{A}$ and $\mathsf{B}$.

\subsection{Learned Forward and Inverse Operators}\label{sec:learnedForwInvOp}

Let us now discuss a similar construction for the diffusion layers. 
For the implementation of the diffusion network, we 
consider the explicit formulation in \eqref{eq:Explicit1} with 
the differential operator $\mathcal{L}(\gamma^{(k)}) = \diver \gamma^{(k)}\grad$ approximated by the stencil (including the time-step $\delta t$)
\begin{align}
\mathsf{L}_{\delta t}(\gamma^{(k)})= \delta t \begin{pmatrix} & \gamma_1^{(k)}&\\ \gamma_2^{(k)}&-\sum_i\gamma_i^{(k)} &\gamma_4^{(k)} \\ &\gamma_3^{(k)}& \end{pmatrix}.
\end{align}
We use zero Neumann boundary conditions on the domain boundary
\begin{equation}
\partial_\nu u \hspace{0.5em} =\ 0  \text{ on } \partial\Omega\times (0,T].
\label{eq:NeumannBCs}
\end{equation}
Then we can represent \eqref{eqn:diffLayerFunc} by 
\begin{equation}\label{eqn:diffDiscUpdate}
\mathbf{F}_{k} = (\id + \mathsf{L}_{ \delta t}(\gamma)) \mathbf{F}_{k-1}.
\end{equation}
The basis of learning a diffusion network is now given as 
estimating the diagonals of $\mathsf{L}_{\delta t}(\gamma)$ and the time-step $\delta t$. This can be done either explicitly as for the CNN or indirectly by an 
estimator network, as we will discuss next.

\subsection{Formulating DiffNet}\label{sec:formulatingDiffNet}

Let us first note, that  if we construct our network by strictly 
following the update in \eqref{eqn:diffDiscUpdate}, we restrict 
ourselves to the forward diffusion. 
To generalise to the inverse problem we consider 
\begin{equation}
 {\mathsf{W}}_{\delta t}(\zeta)  = 
\delta t \begin{pmatrix} &\zeta_1 &\\ \zeta_2& -\zeta_5 &\zeta_4 \\ & \zeta_3 & \end{pmatrix}. 
\end{equation}

Additionally, there are two fundamentally different cases for the diffusivity $\gamma$ we need to consider before formulating a network architecture to capture the underlying behaviour. These two cases are
\begin{itemize}
    \item[i.)]  Linear diffusion; spatially varying and possible time dependence, $\gamma=\gamma(x,t)$.
    \item[ii.)] Nonlinear diffusion; diffusivity depending on the solution $u$, $\gamma=\gamma(u(x,t))$.
\end{itemize}
In the first case, we could simply to try learn the diffusivity 
explicitly, to reproduce the diffusion process. In the second 
case, this is not possible and hence an estimation of the 
diffusivity needs to be performed separately in each time step 
from the image itself, before the diffusion step can be 
performed. This leads to two conceptually different network 
architectures.

\begin{figure}[h!]
\centering
\begin{picture}(320,60)
\put(0,0){\includegraphics[width=320pt]{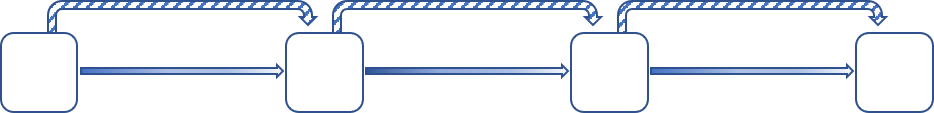}}
\put(105,60){Linear Diffusion Network }
\put(8,10){$\mathbf{F}_0$}
\put(105,10){$\mathbf{F}_1$}
\put(203,10){$\mathbf{F}_2$}
\put(301,10){$\mathbf{F}_3$}

\put(45,20){$\mathsf{L}_{\delta t}(\gamma^{(1)})$}
\put(142,20){$\mathsf{L}_{\delta t}(\gamma^{(2)})$}
\put(237,20){$\mathsf{L}_{\delta t}(\gamma^{(3)})$}
\put(57,43){$+ \id$}
\put(152,43){$+ \id$}
\put(247,43){$+ \id$}

\end{picture}
\caption{\label{fig:linDiffNet} Illustration for a linear 3-layer Diffusion network. In this case we learn the filters $\gamma$ as 
the diffusivity for each layer explicitly. }
\end{figure}

The linear case i.) corresponds to the diffusion layer operator \eqref{eqn:diffLayerFunc} and is aimed to reproduce a linear 
diffusion process with fixed diffusivity. Thus, learning the 
mean-free filter suffices to capture the physics. The resulting 
network architecture is outlined in Figure \ref{fig:linDiffNet}. 
Here, the learned filters can be directly interpreted as the 
diffusivity of layer $k$ and are then applied to $\mathbf{F}_{k-1}$ to produce $\mathbf{F}_k$.

\begin{figure}[h!]
\centering
\begin{picture}(320,95)
\put(0,5){\includegraphics[width=320pt]{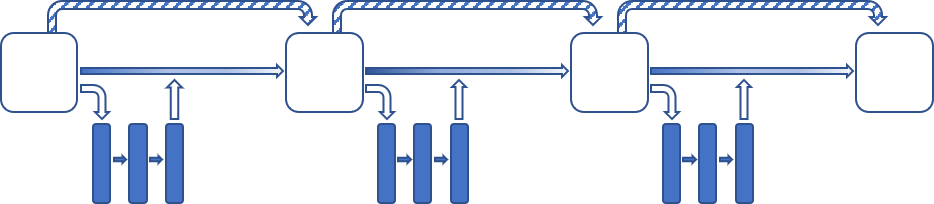}}

\put(80,95){Nonlinear Diffusion Network (DiffNet) }
\put(8,47){$\mathbf{F}_0$}
\put(105,47){$\mathbf{F}_1$}
\put(203,47){$\mathbf{F}_2$}
\put(301,47){$\mathbf{F}_3$}

\put(55,57){$\mathsf{W}_{\delta t}(\zeta^{(1)})$}
\put(63,37.5){$\zeta^{(1)}$}
\put(152,57){$\mathsf{W}_{\delta t}(\zeta^{(2)})$}
\put(160,37.5){$\zeta^{(2)}$}
\put(250,57){$\mathsf{W}_{\delta t}(\zeta^{(3)})$}
\put(258,37.5){$\zeta^{(3)}$}
\put(57,80){$+ \id$}
\put(152,80){$+ \id$}
\put(247,80){$+ \id$}

\put(20,-3){k-layer CNN}
\put(120,-3){k-layer CNN}
\put(220,-3){k-layer CNN}



\end{picture}
\caption{\label{fig:nonLinDiffNet} Illustration for a nonlinear 3-layer Diffusion network. Here the filters $\zeta$ are implicitly estimated by a small $k$-layer CNN and then applied to the image in the filtering layer.}
\end{figure}

In the nonlinear case ii.) we follow the same update structure, 
but now the filters are not learned explicitly, they are rather 
estimated from the input layer itself, as illustrated in Figure \ref{fig:nonLinDiffNet}. Furthermore, since this architecture is 
designed for inversion of the nonlinear diffusion process, we 
employ the generalised stencil  ${\mathsf{W}}_{\delta t}(\zeta)$. Then, given layer $\mathbf{F}_k$, the filters $\zeta$ are 
estimated by a small CNN from $\mathbf{F}_{k-1}$, which are then 
applied following an explicit update as in \eqref{eqn:diffDiscUpdate} to produce $\mathbf{F}_k$. Note that 
the diagonals in the update matrix are produced by the estimation CNN. We will refer to this nonlinear filtering architecture as 
the \emph{DiffNet} under consideration in the rest of this study.

\subsubsection{Implementation}\label{sec:implementation}
The essential part for the implementation of a diffusion network 
is to perform the update \eqref{eqn:diffDiscUpdate} with either $\mathsf{L}_{\delta t}(\gamma)$ or $\mathsf{W}_{\delta t}(\zeta)$. For computational reasons it is not practical to 
build the sparse diagonal matrix and evaluate \eqref{eqn:diffDiscUpdate}, we rather represent the filters $\gamma$ and $\zeta$ as an $n\times n$-image and apply the 
filters as pointwise matrix-matrix multiplication to a shifted 
and cropped image, according to the position in the stencil. This way, the zero Neumann boundary condition 
\eqref{eq:NeumannBCs}
is also automatically incorporated. 

For the linear diffusion network, we would need to learn the 
parameter set $\Theta$, consisting of filters and time steps, 
explicitly. This has the advantage of learning a global 
operation on the image where all parameters are interpretable, 
but it comes with a few disadvantages. First of all, in this 
form we are limited to linear diffusion processes and a fixed 
image size. Furthermore, the parameters grow with the image 
size, i.e. for an image of size $n\times n$ we need $5 n^2$ 
parameters per layer. Thus, applications may be limited.

For the nonlinear architecture of DiffNet, where the filters 
depend on the image at each time step, we introduced an 
additional estimator consisting of a $K$-layer CNN. This CNN 
gets an image, given as layer $\mathbf{F}_k$, as input and 
estimates the filters $\zeta$. The architecture for this $K$-layer CNN as estimator is chosen to be rather simplistic, as 
illustrated in Figure \ref{fig:DiffEstimator}. The input $\mathbf{F}_k$ consists of one channel, which is processed by $K-1$ convolutional layers with 32 channels and a ReLU 
nonlinearity, followed by the last layer without nonlinearity 
and 5 channels for each filter, which are represented as 
matrices of the same size as the input $\mathbf{F}_k$. In 
particular for the chosen filter size of $3\times 3$ we have 
exactly $9\cdot(32+32\cdot 5 + 32^2\cdot(K-2))$ convolutional 
parameters and $32\cdot(K-1)+5$ biases per diffusion layer. That 
is for a 5-layer CNN 29.509 parameters independent of image size.

\begin{figure}[h!]
\centering
\begin{picture}(250,110)

\put(15,-5){\includegraphics[height=90pt]{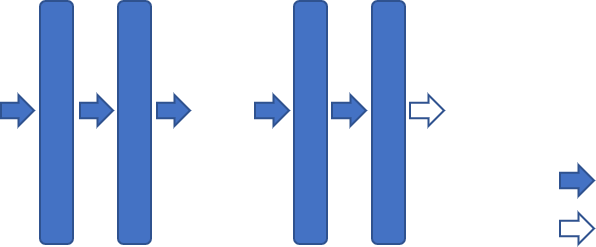}}

\put(25,105){K-layer CNN for filter estimation}
\put(0,42){$F_t$}

\put(90,42){$\cdots$}
\put(73,54){\tiny{$(K-5)$-layers}}

\put(29,90){32}
\put(58,90){32}
\put(123,90){32}
\put(152,90){32}
\put(178,42){$\left(\begin{array}{c}
     \zeta_1 \\
     \zeta_2  \\
     \zeta_3  \\
     \zeta_4  \\
     \zeta_5  
\end{array}\right)$}
\put(234,0){\scriptsize{conv$_{3\times 3}$} }
\put(234,17){\scriptsize{ReLU(conv$_{3\times 3}$)} }

\end{picture}
\caption{\label{fig:DiffEstimator} Architecture of the $K$-layer CNN used as diffusivity estimator in the nonlinear diffusion network (DiffNet). }
\end{figure}

\section{Computational experiments}\label{sec:compExp}
In the following we will examine the reconstruction capabilities 
of the proposed DiffNet. The experiments are divided into a 
simple case of deconvolution, where we can examine the learned 
features, and a more challenging problem of recovering an image 
from its nonlinear diffused and noise corrupted version.
\subsection{Deconvolution with DiffNet}
We first examine a simple deconvolution experiment to determine 
what features the DiffNet learns in an inverse problem. For this 
task we will only consider deconvolution without noise.

\begin{figure}[h!]
\centering
\begin{picture}(320,175)
\put(0,90){\includegraphics[width=75pt]{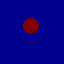}}
\put(80,90){\includegraphics[width=75pt]{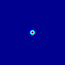}}
\put(160,90){\includegraphics[width=75pt]{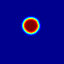}}
\put(240,90){\includegraphics[width=75pt]{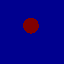}}

\put(0,0){\includegraphics[width=75pt]{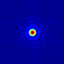}}
\put(80,0){\includegraphics[width=75pt]{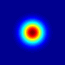}}
\put(160,0){\includegraphics[width=75pt]{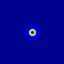}}
\put(240,0){\includegraphics[width=75pt]{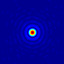}}


\put(8,170){Ground truth}
\put(90,170){Kernel $G_{\sqrt{2T}}$}
\put(160,170){Convolved image}
\put(242.5,170){Network output}

\put(-10,100){\rotatebox{90}{Image space}}
\put(-10,8)  {\rotatebox{90}{Fourier space}}

\end{picture}
\caption{\label{fig:deconvExample} Illustration of the deconvolution problem for a simple ball. Top row shows the image space and the bottom row shows the corresponding absolute value of the Fourier coefficients. All images are plotted on their own scale. }
\end{figure}

The forward problem is given by \eqref{eqn:diffusionEquation} 
with zero Neumann boundary condition \eqref{eq:NeumannBCs} and 
constant diffusivity $\gamma\equiv 1$. For the experiment we 
choose $T=1$, which results in a 
small uniform blurring, 
as shown in Figure \ref{fig:deconvExample}. We 
remind that for the isotropic diffusion, the forward model is 
equivalent to convolution in space with the kernel $G_{\sqrt{2T}}$, see \eqref{eq:IsoGreensConvSoln}. As it is also 
illustrated in Figure \ref{fig:deconvExample}, convolution in 
the spatial domain is equivalent to multiplication in Fourier 
domain. In particular, high frequencies get damped and the 
convolved image is dominated by low frequencies. Hence, for the 
reconstruction task without noise, we essentially need to recover the high frequencies.

\begin{figure}[h!]
\centering
\begin{picture}(250,510)

\put(0,0){\includegraphics[height=500pt]{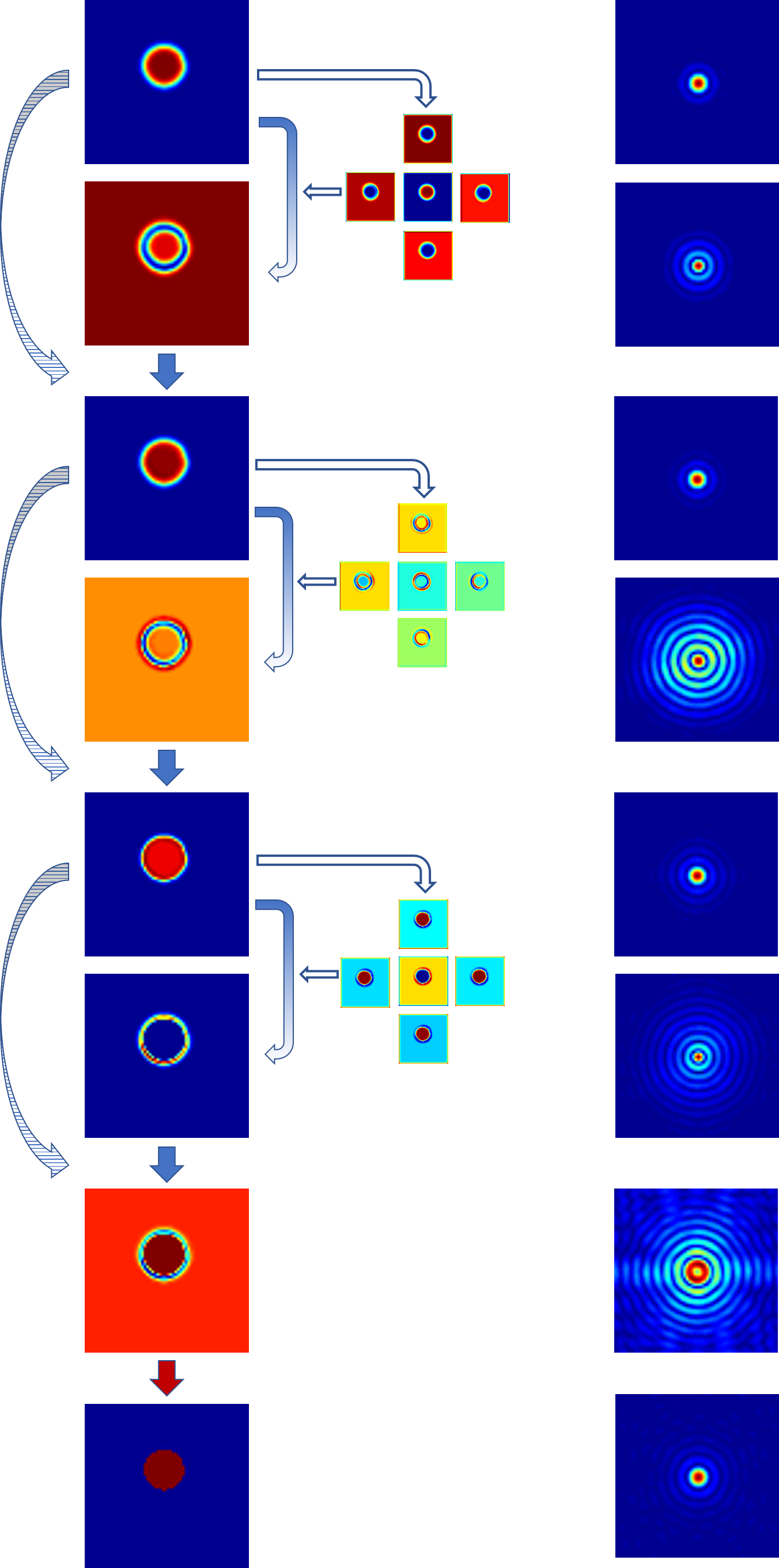}}
\put(82,482){\small{$4$-layer CNN}}
\put(82,357){\small{$4$-layer CNN}}
\put(82,232){\small{$4$-layer CNN}}

\put(-10,420){\rotatebox{90}{$+\id$}}
\put(-10,290){\rotatebox{90}{$+\id$}}
\put(-10,165){\rotatebox{90}{$+\id$}}

\put(60,58.5){\scriptsize{ReLU}}

\put(98,435){\rotatebox{-90}{\scriptsize{$\mathsf{W}(\zeta^{(1)})$}}}
\put(98,310){\rotatebox{-90}{\scriptsize{$\mathsf{W}(\zeta^{(2)})$}}}
\put(98,185){\rotatebox{-90}{\scriptsize{$\mathsf{W}(\zeta^{(3)})$}}}

\put(26,505){Image space}
\put(192,505){Fourier space}

\end{picture}
\caption{\label{fig:deconvDiffnet} Illustration of the deconvolution process with three layers of DiffNet. The left column shows the processed image and intermediate steps. The right column shows the corresponding absolute value of Fourier coefficients. All images are plotted on their own scale. }
\end{figure}

The training and test data for DiffNet consists of simple disks 
of varying radius and contrast. The training set consists of 
1024 samples and the test set of an additional 128, each of size $64\times 64$. The network architecture is chosen following the 
schematic in Figure \ref{fig:nonLinDiffNet}, with 3 diffusion 
layers and a final projection to the positive numbers by a ReLU 
layer. The filter estimator is given by a 4-layer CNN, as 
described in section \ref{sec:implementation}. All networks were 
implemented in Python with TensorFlow \cite{tensorflow2015}. 

The input to the network is given by the convolved image without 
noise and we have minimised the $\ell^2$-loss of the output to 
the ground-truth image. The optimisation is performed for about 
1000 epochs in batches of 16 with the Adam algorithm and initial 
learning rate of $4\cdot10^{-4}$ and a gradual decrease to $10^{-6}$. Training on a single Nvidia Titan Xp GPU takes about 
24 minutes. The final training and test error is both at a PSNR 
of 86.24, which corresponds to a relative $\ell^2$-error of $2.5\cdot 10^{-4}$. We remind that this experiment was performed 
without noise.

The result of the network and intermediate updates for one 
example from the test data are illustrated in Figure \ref{fig:deconvDiffnet}. 
We also show the filters $\zeta^{(k)}$ computed as the output
of the trained CNN in each layer, $k = 1\ldots3$.
The output of the last diffusion layer $\mathbf{F}_3$ is 
additionally processed by a ReLU layer to enforce positivity in 
the final result. It can be seen that the network gradually 
reintroduces the high frequencies in the Fourier domain; 
especially the last layer mainly reintroduces the high 
frequencies to the reconstruction. It is interesting to see, 
that the learned filters follow indeed the convention that the 
central filter is of different sign than the directional 
filters. This enforces the assumption that the filter consists 
of a mean free part and a regularising part, which should be 
small in this case, since we do not have any noise in the data. 
Lastly, we note that the final layer, before projection to the 
positive numbers, has a clearly negative part around the target, 
which will be cut off resulting in a sharp reconstruction of the 
ball.

\subsection{Nonlinear diffusion}\label{sec:expNonlinDiff}
Let us now consider the nonlinear diffusion process with the Perona-Malik filter function \cite{Perona1990} for \eqref{eqn:diffusionEquation} with zero Neumann boundary condition \eqref{eq:NeumannBCs}. In this model the diffusivity is given as a function of the gradient
\begin{equation}
    \gamma(|\grad u|^2) = \frac{1}{1+|\grad u|^2/\lambda^2}
\end{equation}
with contrast parameter $\lambda > 0$. We mainly concentrate 
here on the inverse problem of restoring an image that has been 
diffused with the Perona-Malik filter and contaminated by noise.

For the experiments we have used the test data from the STL-10 
database \cite{coates2011}, which consists of 100,000 RGB images 
with resolution $96\times96$. These images have been converted 
to grayscale and divided to 90,000 for training and 10,000 for 
testing. The obtained images were then diffused for 4 time steps 
with $\delta t = 0.1$ and $\lambda=0.2$. A few sample images 
from the test data with the result of the diffusion are displayed in Figure \ref{fig:STL10_forwInv}. The task is then to revert 
the diffusion process with additional regularisation to deal 
with noise in the data.

\begin{figure}[h!]
\centering
\begin{picture}(380,200)
\put(0,130){\includegraphics[width=60pt]{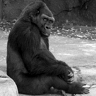}}
\put(65,130){\includegraphics[width=60pt]{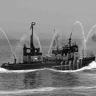}}
\put(130,130){\includegraphics[width=60pt]{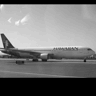}}
\put(195,130){\includegraphics[width=60pt]{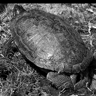}}
\put(260,130){\includegraphics[width=60pt]{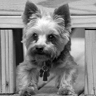}}
\put(325,130){\includegraphics[width=60pt]{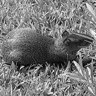}}

\put(0,65){\includegraphics[width=60pt]{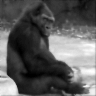}}
\put(65,65){\includegraphics[width=60pt]{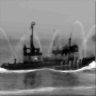}}
\put(130,65){\includegraphics[width=60pt]{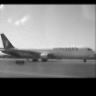}}
\put(195,65){\includegraphics[width=60pt]{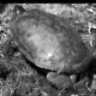}}
\put(260,65){\includegraphics[width=60pt]{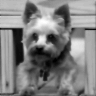}}
\put(325,65){\includegraphics[width=60pt]{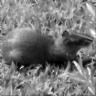}}

\put(0,0){\includegraphics[width=60pt]{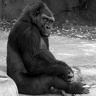}}
\put(65,0){\includegraphics[width=60pt]{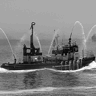}}
\put(130,0){\includegraphics[width=60pt]{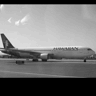}}
\put(195,0){\includegraphics[width=60pt]{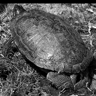}}
\put(260,0){\includegraphics[width=60pt]{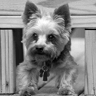}}
\put(325,0){\includegraphics[width=60pt]{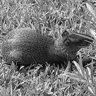}}

\put(-10,140){\rotatebox{90}{Original}}
\put(-10,75){\rotatebox{90}{Diffused}}
\put(-18,12){\rotatebox{90}{Inverted}}
\put(-9,15){\rotatebox{90}{DiffNet}}

\end{picture}
\caption{\label{fig:STL10_forwInv} Samples from the test data for learning the inversion of nonlinear diffusion without noise. Mean PSNR for reconstructed test data with DiffNet is:  63.72}
\end{figure}

\begin{figure}[h!]
\centering
\begin{picture}(380,130)

\put(0,65){\includegraphics[width=60pt]{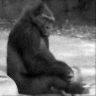}}
\put(65,65){\includegraphics[width=60pt]{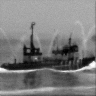}}
\put(130,65){\includegraphics[width=60pt]{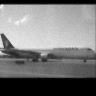}}
\put(195,65){\includegraphics[width=60pt]{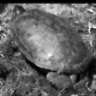}}
\put(260,65){\includegraphics[width=60pt]{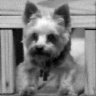}}
\put(325,65){\includegraphics[width=60pt]{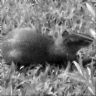}}

\put(0,0){\includegraphics[width=60pt]{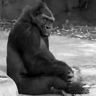}}
\put(65,0){\includegraphics[width=60pt]{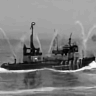}}
\put(130,0){\includegraphics[width=60pt]{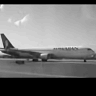}}
\put(195,0){\includegraphics[width=60pt]{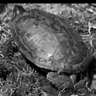}}
\put(260,0){\includegraphics[width=60pt]{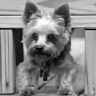}}
\put(325,0){\includegraphics[width=60pt]{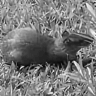}}

\put(-18,75){\rotatebox{90}{Diffused}}
\put(-9,72){\rotatebox{90}{$1\%$ Noise}}
\put(-18,12){\rotatebox{90}{Inverted}}
\put(-9,15){\rotatebox{90}{DiffNet}}

\end{picture}
\caption{\label{fig:STL10_InvNoise} Samples from the test data 
for learning the inversion of nonlinear diffusion with $1\%$ 
noise. Mean PSNR for reconstructed test data with DiffNet is:     34.21}
\end{figure}

For all experiments we have used the same network architecture 
of DiffNet using the architecture illustrated in Figure \ref{fig:nonLinDiffNet}. By performing initial tests on the 
inversion without noise, we have found that 5 diffusion layers 
with a 4-layer CNN, following the architecture in \ref{fig:DiffEstimator}, gave the best trade-off between 
reconstruction quality and network size. Increasing the amount 
of either layers, led to minimal increase in performance. 
Additionally, we have used a ReLU layer at the end to enforce 
non-negativity of the output, similarly to the last experiment. 
We emphasise that this architecture was used for all experiments 
and hence some improvements for the high noise cases might be 
expected with more layers. All networks were trained for 18 
epochs, with a batch size of 16, and $\ell^2$-loss. For the 
optimisation we have used the Adam algorithm with initial 
learning rate of $2\cdot 10^{-3}$ and a gradual decrease to $4\cdot 10^{-6}$. Training on a single Nvidia Titan Xp GPU takes about 75 minutes.

As benchmark, we have performed the same experiments with a 
widely used network architecture known as U-Net \cite{Ronneberger2015}. This architecture has been widely 
applied in inverse problems \cite{antholzer2018,hamilton2018TMI,Jin2017,Kang2017} and is
mainly used to post-process initial directly reconstructed 
images from undersampled or noisy data. For instance by filtered 
back-projection in X-ray tomography or the inverse Fourier 
transform in magnetic resonance imaging \cite{hauptmann2018MRM}. 
The network architecture we are using follows the publication \cite{Jin2017} and differs from the original mainly by a 
residual connection at the end. That means the network is 
trained to remove noise and undersampling artefacts from the 
initial reconstruction. In our context, the network needs to 
learn how to remove noise and reintroduce edges. For training 
we have followed a similar protocol as for DiffNet. The 
only difference is that we started with an initial learning rate 
of $5\cdot 10^{-4}$ with a gradual decrease to $2\cdot 10^{-5}$. 
Training of U-Net takes about 3 hours.

\begin{figure}[h!]
\centering
\begin{picture}(325,225)

\put(0,115){\includegraphics[width=100pt]{PMimages/image_clean_8.png}}
\put(115,115){\includegraphics[width=100pt]{PMimages/image_inv_8.png}}
\put(225,115){\includegraphics[width=100pt]{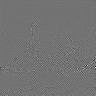}}

\put(0,0){\includegraphics[width=100pt]{PMimages/image_diff_8.png}}
\put(115,0){\includegraphics[width=100pt]{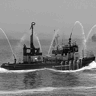}}
\put(225,0){\includegraphics[width=100pt]{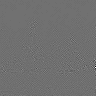}}

\put(30,220){Original}
\put(10,105){{Diffused (No Noise)}}
\put(130,220){Reconstructed}
\put(255,220){Residual}
\put(105,150){\rotatebox{90}{DiffNet}}
\put(105,35){\rotatebox{90}{U-Net}}

\end{picture}
\caption{\label{fig:reconComp} Comparison of reconstruction quality for reconstruction from  nonlinear diffused image without noise. Both networks are trained on the full set of 90,000 images. PNSR: DiffNet 65.34, U-Net 61.08}
\end{figure}

\begin{figure}[h!]
\centering
\begin{picture}(325,220)

\put(0,115){\includegraphics[width=100pt]{PMimages/image_clean_8.png}}
\put(115,115){\includegraphics[width=100pt]{PMimages/image_inv1e2_8.png}}
\put(225,115){\includegraphics[width=100pt]{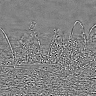}}

\put(0,0){\includegraphics[width=100pt]{PMimages/image_diff1e2_8.png}}
\put(115,0){\includegraphics[width=100pt]{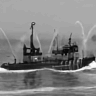}}
\put(225,0){\includegraphics[width=100pt]{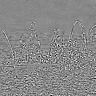}}

\put(30,220){Original}
\put(8,105){{Diffused ($1\%$ Noise)}}
\put(130,220){Reconstructed}
\put(255,220){Residual}
\put(105,150){\rotatebox{90}{DiffNet}}
\put(105,35){\rotatebox{90}{U-Net}}

\end{picture}
\caption{\label{fig:reconCompNoise} Comparison of reconstruction quality for reconstruction from 1\% noise contaminated nonlinear diffused image. Both networks are trained on the full set of 90,000 images. PNSR: DiffNet 34.96, U-Net 35.27}
\end{figure}

The reconstruction results, for some samples of the test data, 
with DiffNet can be seen in Figure \ref{fig:STL10_forwInv} for 
the case without noise and in Figure \ref{fig:STL10_InvNoise} 
for 1\% noise on the diffused images. A comparison of the 
reconstruction results with U-Net and DiffNet are shown in Figure \ref{fig:reconComp} for the test without noise and in Figure \ref{fig:reconComp} for 1\% noise. Qualitatively, the 
reconstructed images are very similar, as can be seen in the 
residual images in the last column. The leftover noise pattern 
for both networks is concentrated on the fine structures of the 
ship.
Quantitatively, for the noise free experiment DiffNet has an 
increase of 4dB in PSNR compared to the result of U-Net, 65.34 
(DiffNet) compared to 61.08 (U-Net). For the case with 1\% noise, the quantitative measures are very similar. Here U-Net has a 
slightly higher PSNR with 35.27 compared to DiffNet with 34.96. A thorough study of reconstruction quality of both networks follows in the next section as well as some interpretation of the learned features in DiffNet.

\section{Discussion}\label{sec:discussion}
First of all we note that the updates in DiffNet are performed 
explicitly and that the CNN in the architecture is only used to 
produce the filters $\zeta$. This means that DiffNet needs to 
learn a problem specific processing, in contrast to a purely data driven processing in a CNN. Consequently, the amount of necessary learnable parameters is much lower. For instance the 5-layer 
DiffNet used for inversion of the nonlinear diffusion in section \ref{sec:expNonlinDiff} has 101,310 parameters, whereas the used 
U-Net with a filter size of $3\times 3$ has a total of 34,512,705 
parameters; i.e DiffNet uses only $\sim 0.3\%$ of parameters 
compared to U-Net and hence the learned features can be seen to be much 
more explicit. In the following we discuss some aspects that 
arise from this observation, such as generalisability and 
interpretability.

\begin{figure}[t!]
\centering
\begin{picture}(400,335)
\put(-32,160){\includegraphics[width=225pt]{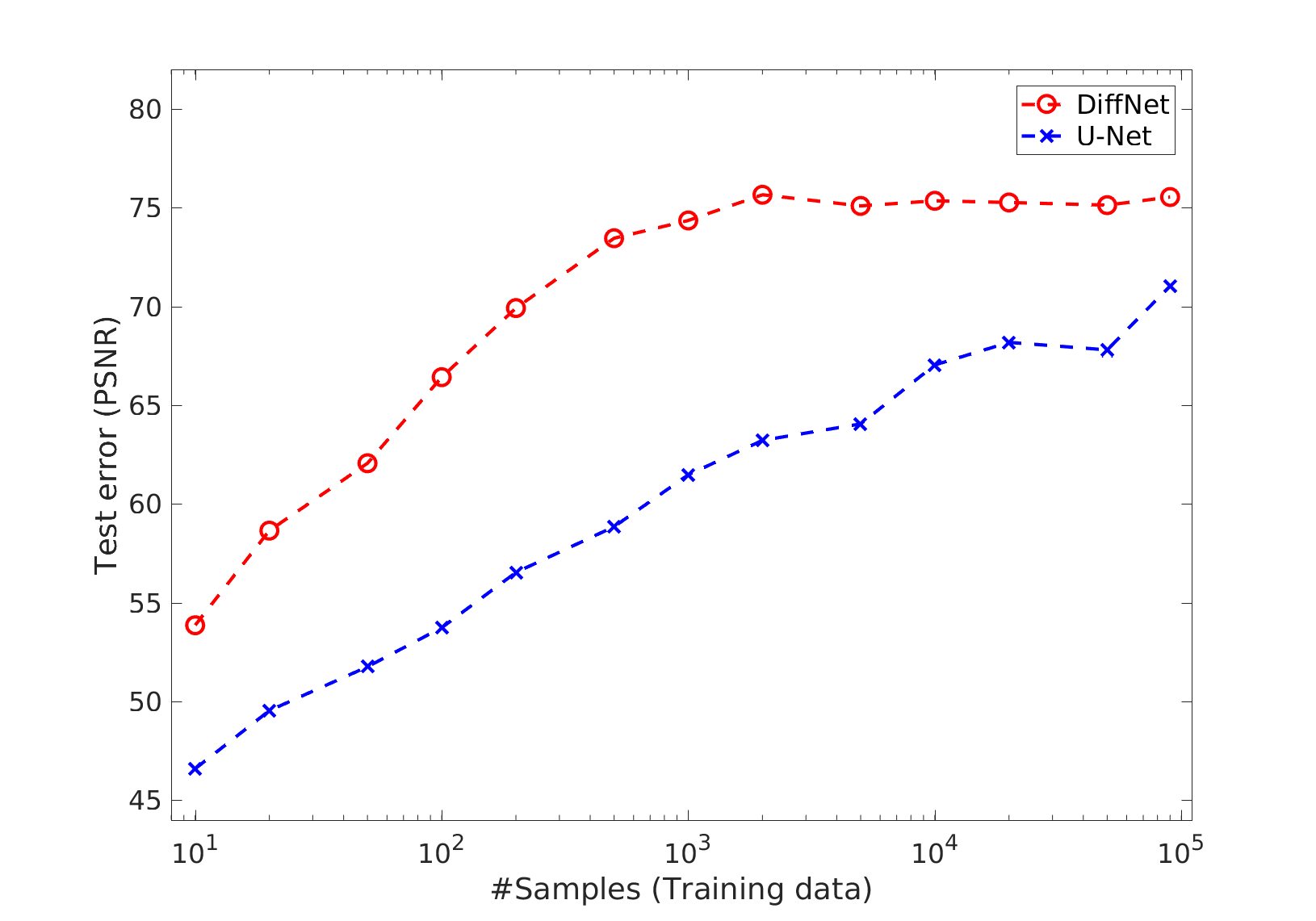}}
\put(50,310){Forward Problem}
\put(176,160){\includegraphics[width=225pt]{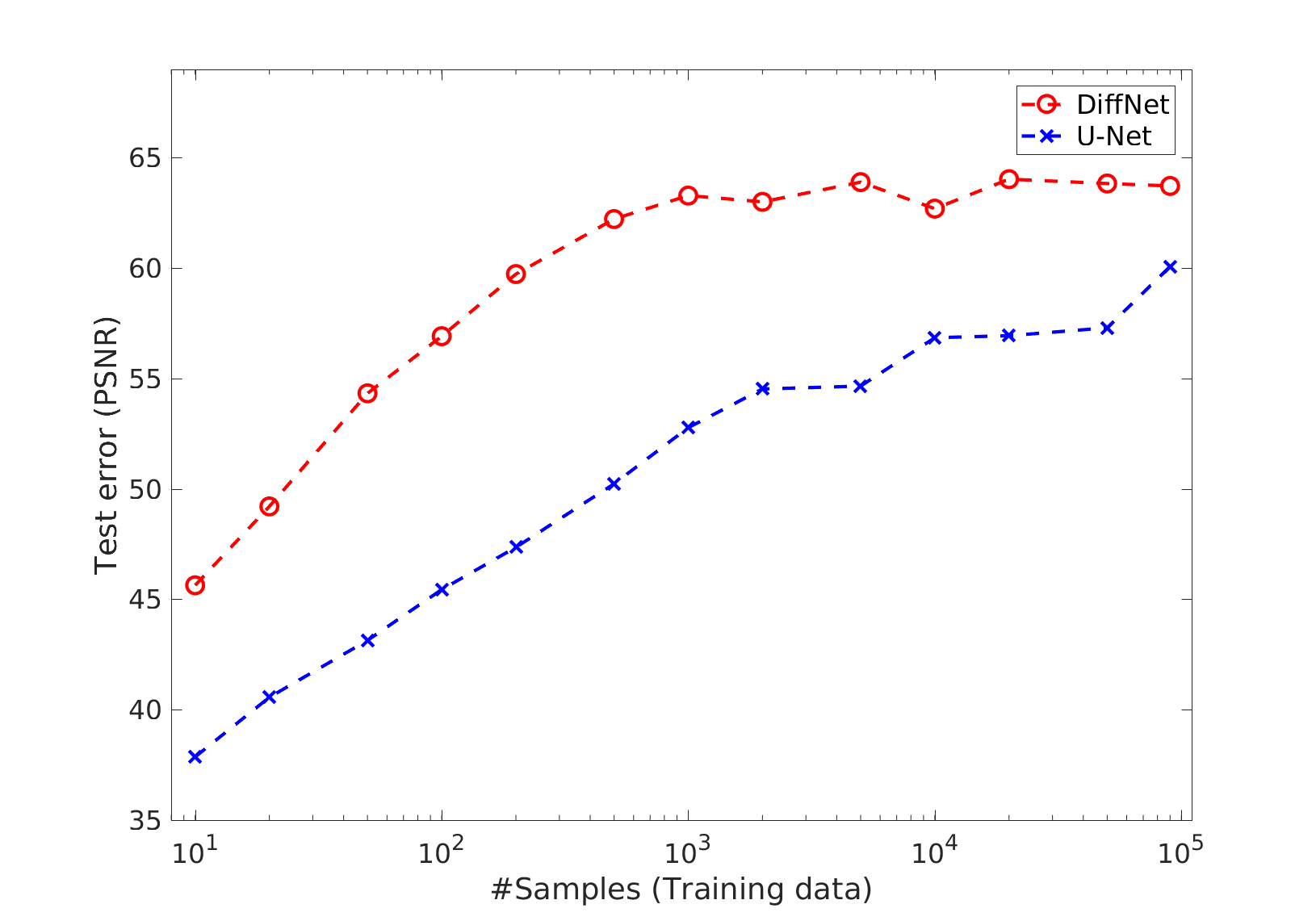}}
\put(230,310){Inverse Problem (no Noise)}

\put(-32,-5){\includegraphics[width=225pt]{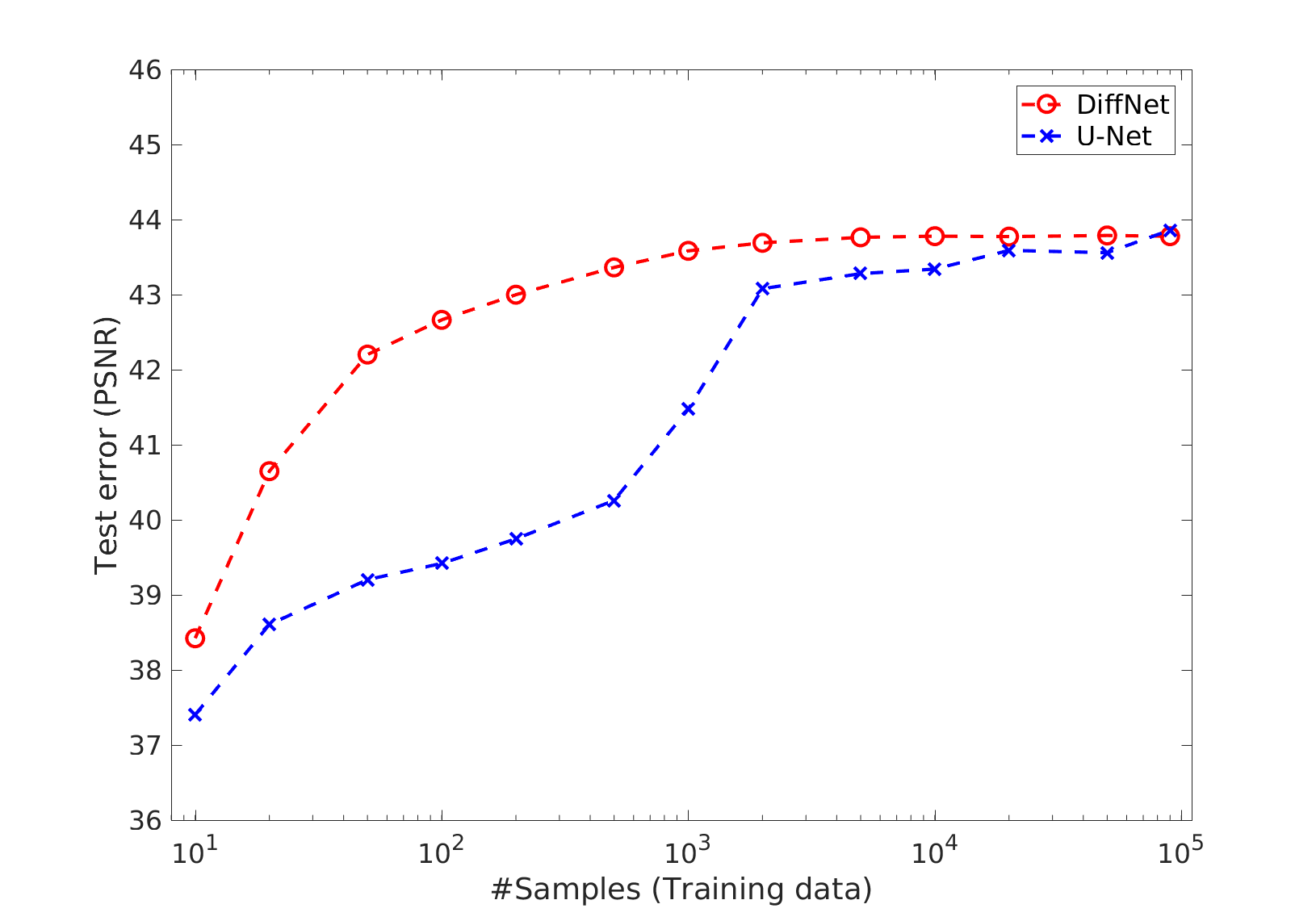}}
\put(30,145){Inverse Problem (0.1\% noise)}
\put(176,-5){\includegraphics[width=225pt]{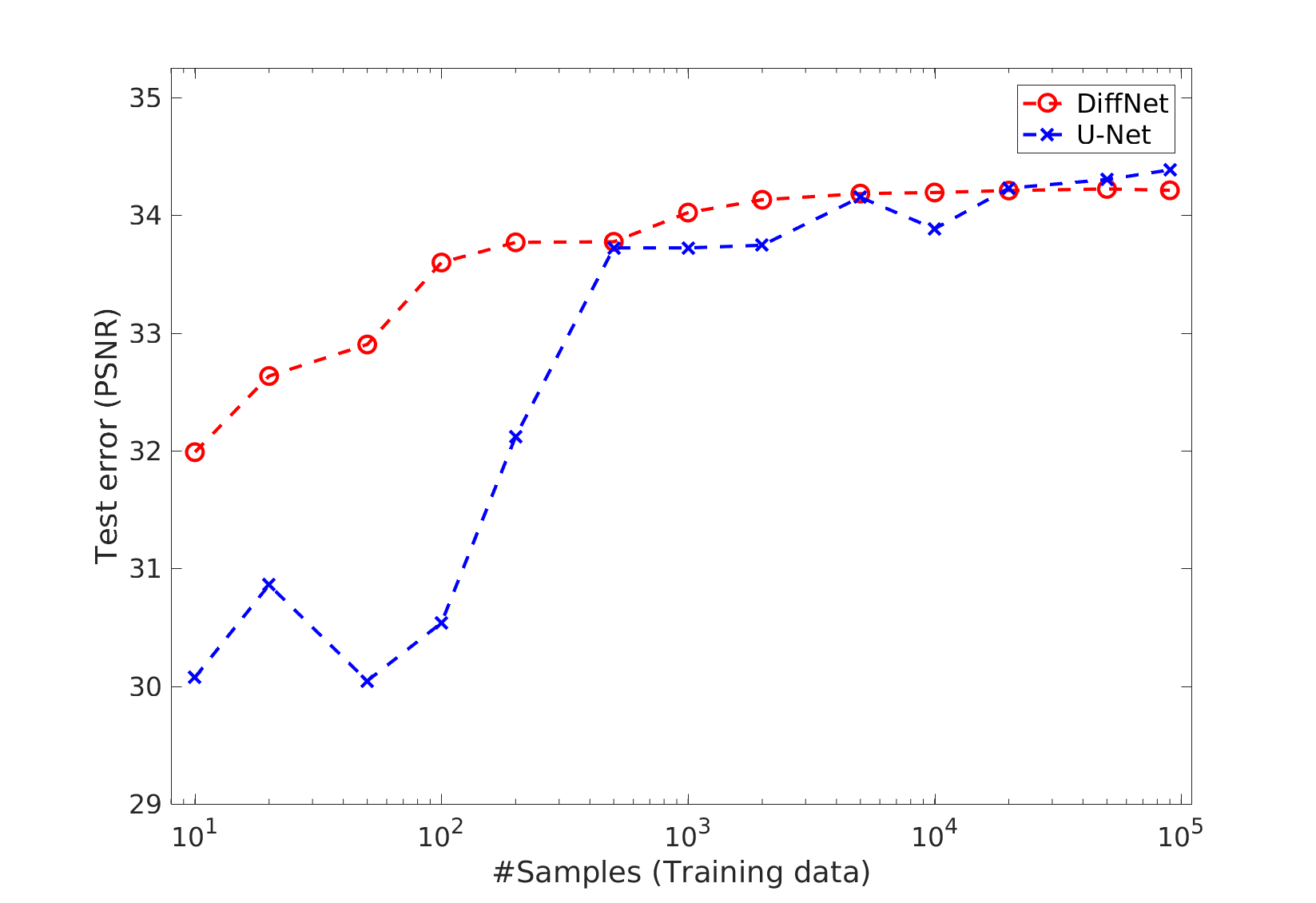}}
\put(230,145){Inverse Problem (1\% noise)}

\put(110,330){\Large{Test error vs. training size}}

\end{picture}
\caption{\label{fig:TestErrorVsSample} Generalisation plot for 
the forward and inverse problem of nonlinear diffusion and 
varying noise levels. Test error depending on the amount of 
training data, for both DiffNet and U-Net.}
\end{figure}

\subsection{Generalisability}
To test the generalisation properties of the proposed DiffNet we 
have performed similar experiments as in section \ref{sec:expNonlinDiff} for nonlinear diffusion, but with 
increasing amounts of training data. Under the assumption that 
DiffNet learns a more explicit update than a classic CNN, we 
would expect also to require less training data to achieve a good test error. 
To certify this assumption, we have examined 4 settings of 
nonlinear diffusion with the Perona-Malik filter: learning the 
forward model, learning to reconstruct from the diffused image 
without noise, as well as with 0.1\% and 1\% noise. We then 
created training data sets of increasing size from just 10 
samples up the full size of 90,000. For all scenarios we have 
trained DiffNet and U-Net following the training protocol 
described in \ref{sec:expNonlinDiff}. Additionally, we have 
aborted the procedure when the networks started to clearly 
overfit the training data.

Results for the four scenarios are shown in Figure \ref{fig:TestErrorVsSample}. Most notably DiffNet outperforms 
U-Net clearly for the forward problem and the noise free 
inversion, by 4dB and 3dB, respectively. For the noisy cases 
both networks perform very similar for the full training data 
size of 90,000. The biggest difference overall is that DiffNet 
achieves its maximum test error already with 500-1,000 samples independent of the noise case, whereas the U-Net test error saturates earlier with higher noise. 
In conclusion we can say, that for the noisy cases both networks 
are very comparable in reconstruction quality, but for small 
amounts of data the explicit nature of DiffNet is clearly 
superior.

\subsection{Interpretation of learned filters}
Since all updates are performed explicitly with the output from 
the filter estimation CNN, we can interpret some of the learned 
features. For this purpose we show the filters for the ship 
image from section \ref{sec:expNonlinDiff} for the three 
inversion scenarios under consideration. In Figure \ref{fig:meanFilters} we show the sum of all learned filters, 
i.e. $\sum_{i=1}^4 \zeta_i - \zeta_5$. If the network would only 
learn the mean-free differentiating part, then these images 
should be zero. That implies that the illustrated filters in Figure \ref{fig:meanFilters} can be related to the learned 
regularisation $\mathsf{S}(\zeta)$. Additionally, we also show 
the diagonal filters $\zeta_5$ in Figure \ref{fig:diagFilters}.

We would expect that with increasing noise level, the filters 
will incorporate more smoothing to deal with the noise; this 
implies that the edges get wider with increasing noise level. This can 
be nicely observed for the diagonal filters in Figure \ref{fig:diagFilters}. For the smoothing in Figure \ref{fig:meanFilters}, we see that the first layer consists of 
broader details and edges that are refined in the noise free 
case for increasing layers. In the noisy case the later layers 
include some smooth features that might depict the 
regularisation necessary in the inversion procedure. It is 
generally interesting to observe, that the final layer shows 
very fine local details, necessary to restore fine details for
the final output.

\begin{figure}[h!]
\centering
\begin{picture}(400,250)

\put(0,160){\includegraphics[width=75pt]{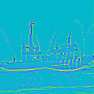}}
\put(80,160){\includegraphics[width=75pt]{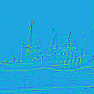}}
\put(160,160){\includegraphics[width=75pt]{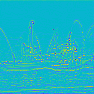}}
\put(240,160){\includegraphics[width=75pt]{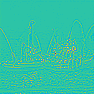}}
\put(320,160){\includegraphics[width=75pt]{PMimages/kap4S_inv_8.png}}

\put(0,80){\includegraphics[width=75pt]{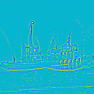}}
\put(80,80){\includegraphics[width=75pt]{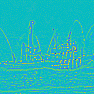}}
\put(160,80){\includegraphics[width=75pt]{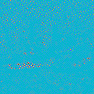}}
\put(240,80){\includegraphics[width=75pt]{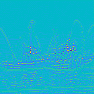}}
\put(320,80){\includegraphics[width=75pt]{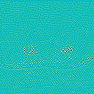}}

\put(0,0){\includegraphics[width=75pt]{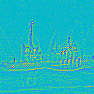}}
\put(80,0){\includegraphics[width=75pt]{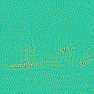}}
\put(160,0){\includegraphics[width=75pt]{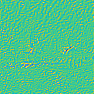}}
\put(240,0){\includegraphics[width=75pt]{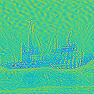}}
\put(320,0){\includegraphics[width=75pt]{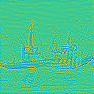}}

\put(20,240){Layer 1}
\put(100,240){Layer 2}
\put(180,240){Layer 3}
\put(260,240){Layer 4}
\put(340,240){Layer 5}
\put(-10,180){\rotatebox{90}{No Noise}}
\put(-10,90){\rotatebox{90}{$0.1\%$ Noise}}
\put(-10,20){\rotatebox{90}{$1\%$ Noise}}


\end{picture}
\caption{\label{fig:meanFilters} Filter updates $\sum_{i=1}^4 \zeta_i - \zeta_5$ for different noise levels. Each image is displayed on its own scale.}
\end{figure}

\begin{figure}[h!]
\centering
\begin{picture}(400,250)

\put(0,160){\includegraphics[width=75pt]{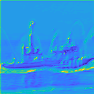}}
\put(80,160){\includegraphics[width=75pt]{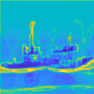}}
\put(160,160){\includegraphics[width=75pt]{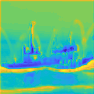}}
\put(240,160){\includegraphics[width=75pt]{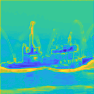}}
\put(320,160){\includegraphics[width=75pt]{PMimages/kap4C_inv_8.png}}

\put(0,80){\includegraphics[width=75pt]{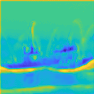}}
\put(80,80){\includegraphics[width=75pt]{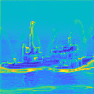}}
\put(160,80){\includegraphics[width=75pt]{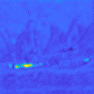}}
\put(240,80){\includegraphics[width=75pt]{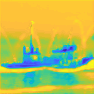}}
\put(320,80){\includegraphics[width=75pt]{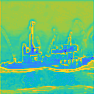}}

\put(0,0){\includegraphics[width=75pt]{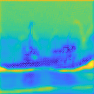}}
\put(80,0){\includegraphics[width=75pt]{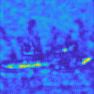}}
\put(160,0){\includegraphics[width=75pt]{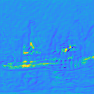}}
\put(240,0){\includegraphics[width=75pt]{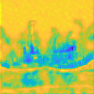}}
\put(320,0){\includegraphics[width=75pt]{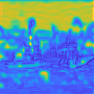}}

\put(-10,180){\rotatebox{90}{No Noise}}
\put(-10,90){\rotatebox{90}{$0.1\%$ Noise}}
\put(-10,20){\rotatebox{90}{$1\%$ Noise}}

\put(20,240){Layer 1}
\put(100,240){Layer 2}
\put(180,240){Layer 3}
\put(260,240){Layer 4}
\put(340,240){Layer 5}


\end{picture}
\caption{\label{fig:diagFilters} Obtained diagonal filters $\zeta_5$ for different noise levels. Each filter is displayed on its own scale.}
\end{figure}

Finally, we have computed training data of a wider noise range 
to examine the regularisation properties of the learned network. 
For this we have taken the full 90,000 training samples and 
contaminated the diffused image with noise ranging from $0.01\%$ 
to 10\% noise. As we conjectured in section \ref{sec:PDEdiscretisation}, the learned update filters can be 
decomposed to a mean-free operation and a smoothing part $\mathsf{W}(\zeta)=\mathsf{L}(\zeta)+\mathsf{S}(\zeta)$. 
This implies that the magnitude of $\mathsf{S}(\zeta)$ has 
to increase  with higher noise. To examine this conjecture, 
we have taken (fixed) 32 samples from the reconstructed test 
data for each noise level and computed an estimate 
of $\mathsf{S}$ as the sum $\sum_{i=1}^4 \zeta_i - \zeta_5$, i.e. the deviation from the 
mean-free part. Furthermore, we interpret the smoothing level $\alpha$ as the mean of the absolute value of $\mathsf{S}$. The 
resulting graph of smoothing versus noise level is shown in 
Figure \ref{fig:smoothingVsNoise}. As we have conjectured, the 
estimate of $\alpha$ increases clearly with the noise level and 
hence we believe our interpretation of the learned filters as 
the composition of a mean-free part and a smoothing necessary 
for ill-posed inverse problems is valid.

\begin{figure}[h!]
\centering
\begin{picture}(225,180)
\put(0,0){\includegraphics[width=250pt]{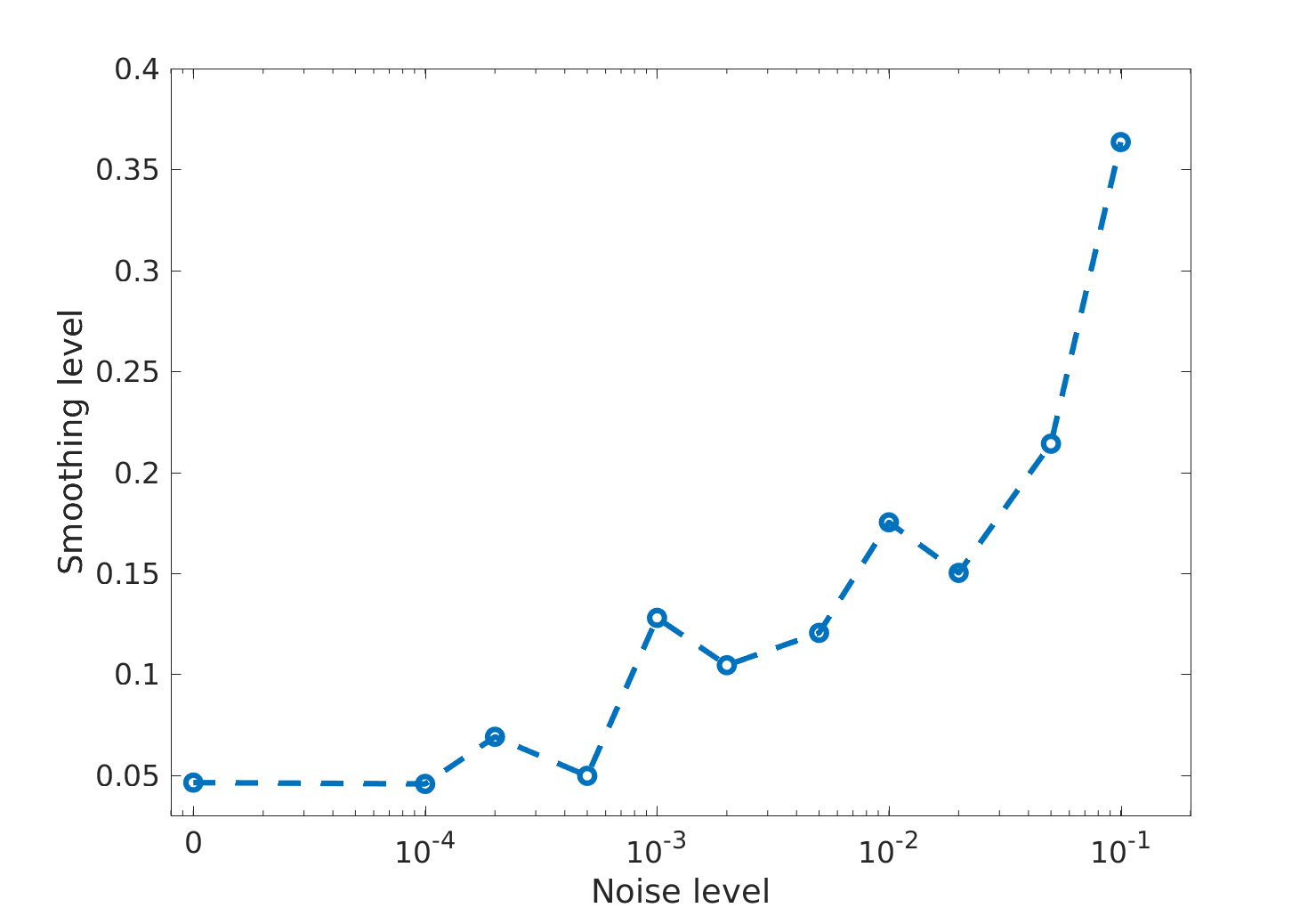}}
\put(45,165){Increasing smoothing with noise level}

\end{picture}
\caption{\label{fig:smoothingVsNoise} Estimate of the smoothing level $\alpha$ for increasing noise in the inverse problem. Computed over a sample of 32 images from the test data.}
\end{figure}






 

\section{Conclusions}\label{sec:conclusions}
In this paper we have explored the possibility to establish novel
network architectures based on physical models other than 
convolutions; in particular we concentrated here on diffusion 
processes. As main contributions, we have introduced some 
non-linear forward mappings, modelled through learning rather than just through PDEs or integral transforms. We have reviewed (regularised) 
inverse diffusion processes for inverting such maps. In 
particular, we have conjectured that these inverse diffusion 
processes can be represented by local non-stationary filters, 
which can be learned in a network architecture. More specific, 
these local filters can be represented by a sparse sub-diagonal 
(SSD) matrix and hence efficiently used in the discrete setting of a neural network. We emphasise that even though we have concentrated 
this study on a specific structure for these SSD matrices based 
on diffusion, other (higher order) models can be considered. 

We obtain higher interpretability of the network architecture, 
since the image processing is  explicitly performed by the 
application of the SSD matrices. Consequently, this means that only a 
fraction of parameters is needed in comparison to classical CNN 
architectures to obtain similar reconstruction results. 
We believe that the presented framework and the proposed network architectures can be useful for learning physical models in the context of imaging and 
inverse problems, especially, where a physical interpretation of 
the learned features is crucial to establish confidence in the 
imaging task.

\section*{Acknowledgements}
We thank Jonas Adler and Sebastian Lunz for valubale discussions and comments. We acknowledge the support of NVIDIA Corporation with one Titan Xp GPU.
\bibliographystyle{siam}
\bibliography{Inverse_problems_references_2018}

\begin{thebibliography}{10}

\bibitem{tensorflow2015}
{\sc M.~Abadi~et al.}, {\em {TensorFlow}: Large-scale machine learning on
  heterogeneous systems}, 2015.
\newblock Software available from \url{https://www.tensorflow.org/}.

\bibitem{Adler2017}
{\sc J.~Adler and O.~{\"O}ktem}, {\em Solving ill-posed inverse problems using
  iterative deep neural networks}, Inverse Problems, 33 (2017), p.~124007.

\bibitem{antholzer2018}
{\sc S.~Antholzer, M.~Haltmeier, and J.~Schwab}, {\em Deep learning for
  photoacoustic tomography from sparse data}, Inverse Problems in Science and
  Engineering,  (2018), pp.~1--19.

\bibitem{calvetti2008a}
{\sc D.~Calvetti and E.~Somersalo}, {\em Hypermodels in the {B}ayesian imaging
  framework}, Inverse Problems, 24 (2008), p.~034013.

\bibitem{chen2015}
{\sc Y.~Chen, W.~Yu, and T.~Pock}, {\em On learning optimized reaction
  diffusion processes for effective image restoration}, in Proceedings of the
  IEEE conference on computer vision and pattern recognition, 2015,
  pp.~5261--5269.

\bibitem{coates2011}
{\sc A.~Coates, A.~Ng, and H.~Lee}, {\em An analysis of single-layer networks
  in unsupervised feature learning}, in Proceedings of the fourteenth
  international conference on artificial intelligence and statistics, 2011,
  pp.~215--223.

\bibitem{douri2005a}
{\sc A.~Douiri, M.~Schweiger, J.~Riley, and S.~Arridge}, {\em Local diffusion
  regularization method for optical tomography reconstruction by using robust
  statistics}, Opt. Lett., 30 (2005), pp.~2439--2441.

\bibitem{Goodfellow2016}
{\sc I.~Goodfellow, Y.~Bengio, and A.~Courville}, {\em Deep Learning}, MIT
  Press, 2016.
\newblock \url{http://www.deeplearningbook.org}.

\bibitem{haber2017}
{\sc E.~Haber and L.~Ruthotto}, {\em Stable architectures for deep neural
  networks}, Inverse Problems, 34 (2017), p.~014004.

\bibitem{hamilton2018TMI}
{\sc S.~J. Hamilton and A.~Hauptmann}, {\em Deep d-bar: Real time electrical
  impedance tomography imaging with deep neural networks}, IEEE Transactions on
  Medical Imaging,  (2018).

\bibitem{Hamilton2014}
{\sc S.~J. Hamilton, A.~Hauptmann, and S.~Siltanen}, {\em A data-driven
  edge-preserving {D}-bar method for electrical impedance tomography}, Inverse
  Problems and Imaging, 8 (2014), pp.~1053--1072.

\bibitem{hammernik2018}
{\sc K.~Hammernik, T.~Klatzer, E.~Kobler, M.~P. Recht, D.~K. Sodickson,
  T.~Pock, and F.~Knoll}, {\em Learning a variational network for
  reconstruction of accelerated {MRI} data}, Magnetic resonance in medicine, 79
  (2018), pp.~3055--3071.

\bibitem{hannukainen2015}
{\sc A.~Hannukainen, L.~Harhanen, N.~Hyv{\"o}nen, and H.~Majander}, {\em
  Edge-promoting reconstruction of absorption and diffusivity in optical
  tomography}, Inverse Problems, 32 (2015), p.~015008.

\bibitem{hauptmann2018MRM}
{\sc A.~Hauptmann, S.~Arridge, F.~Lucka, V.~Muthurangu, and J.~Steeden}, {\em
  Real-time cardiovascular {MR} with spatio-temporal artifact suppression using
  deep learning-proof of concept in congenital heart disease.}, Magnetic
  resonance in medicine,  (2018).

\bibitem{Hauptmann2018TMI}
{\sc A.~Hauptmann, F.~Lucka, M.~Betcke, N.~Huynh, J.~Adler, B.~Cox, P.~Beard,
  S.~Ourselin, and S.~Arridge}, {\em Model-based learning for accelerated,
  limited-view 3-d photoacoustic tomography}, IEEE transactions on medical
  imaging, 37 (2018), pp.~1382--1393.

\bibitem{he2016}
{\sc K.~He, X.~Zhang, S.~Ren, and J.~Sun}, {\em Deep residual learning for
  image recognition}, in Proceedings of the IEEE conference on computer vision
  and pattern recognition, 2016, pp.~770--778.

\bibitem{helin2010hierarchical}
{\sc T.~Helin and M.~Lassas}, {\em Hierarchical models in statistical inverse
  problems and the {M}umford--{S}hah functional}, Inverse problems, 27 (2010),
  p.~015008.

\bibitem{hofer2017}
{\sc C.~Hofer, R.~Kwitt, M.~Niethammer, and A.~Uhl}, {\em Deep learning with
  topological signatures}, in Advances in Neural Information Processing
  Systems, 2017, pp.~1634--1644.

\bibitem{Jin2017}
{\sc K.~H. Jin, M.~T. McCann, E.~Froustey, and M.~Unser}, {\em Deep
  convolutional neural network for inverse problems in imaging}, IEEE
  Transactions on Image Processing, 26 (2017), pp.~4509--4522.

\bibitem{Kang2017}
{\sc E.~Kang, J.~Min, and J.~C. Ye}, {\em A deep convolutional neural network
  using directional wavelets for low-dose {X}-ray {CT} reconstruction}, Medical
  Physics, 44 (2017).

\bibitem{khoo2017}
{\sc Y.~Khoo, J.~Lu, and L.~Ying}, {\em Solving parametric {PDE} problems with
  artificial neural networks}, ArXiv, 1707.03351v2 (2017).

\bibitem{kim2018}
{\sc B.~Kim, M.~Wattenberg, J.~Gilmer, C.~Cai, J.~Wexler, F.~Viegas, et~al.},
  {\em Interpretability beyond feature attribution: Quantitative testing with
  concept activation vectors (tcav)}, in International Conference on Machine
  Learning, 2018, pp.~2673--2682.

\bibitem{kimmel2003}
{\sc R.~Kimmel}, {\em Numerical geometry of images: Theory, algorithms, and
  applications}, Springer Science \& Business Media, 2003.

\bibitem{liu2010}
{\sc R.~Liu, Z.~Lin, W.~Zhang, and Z.~Su}, {\em Learning pdes for image
  restoration via optimal control}, in European Conference on Computer Vision,
  Springer, 2010, pp.~115--128.

\bibitem{MeinhardtICCV2017}
{\sc T.~Meinhardt, M.~Moeller, C.~Hazirbad, and D.~Cremers}, {\em Learning
  proximal operators: Using denoising networks for regularizing inverse imaging
  problems}, in International Conference on Computer Vision, 2017,
  pp.~1781--1790.

\bibitem{Perona1990}
{\sc P.~Perona and J.~Malik}, {\em Scale-space and edge detection using
  anisotropic diffusion}, IEEE Transactions on Pattern Analysis and Machine
  Intelligence, 12 (1990), pp.~629--639.

\bibitem{raissi2017}
{\sc M.~Raissi and G.~E. Karniadakis}, {\em Hidden physics models: Machine
  learning of nonlinear partial differential equations}, ArXiv, 1708.00588v2
  (2017).

\bibitem{Ronneberger2015}
{\sc O.~Ronneberger, P.~Fischer, and T.~Brox}, {\em U-net: Convolutional
  networks for biomedical image segmentation}, in International Conference on
  Medical Image Computing and Computer-Assisted Intervention, Springer, 2015,
  pp.~234--241.

\bibitem{ruthotto2018}
{\sc L.~Ruthotto and E.~Haber}, {\em Deep neural networks motivated by partial
  differential equations}, arXiv preprint arXiv:1804.04272,  (2018).

\bibitem{Sapiro2006}
{\sc G.~Sapiro}, {\em Geometric partial differential equations and image
  analysis}, Cambridge university press, 2006.

\bibitem{shalev2014}
{\sc S.~Shalev-Shwartz and S.~Ben-David}, {\em Understanding machine learning:
  From theory to algorithms}, Cambridge university press, 2014.

\bibitem{Sirignano2017}
{\sc J.~Sirignano and K.~Spiliopoulos}, {\em {DGM}: A deep learning algorithm
  for solving partial differential equations}, ArXiv, 1708.07469v1 (2017).

\bibitem{Tompson2017}
{\sc J.~Tompson, K.~Schlachter, P.~Sprechmann, and K.~Perlin}, {\em
  Accelerating {E}ulerian fluid simulation with convolutional networks}, ArXiv,
  1607.03597v6 (2017).

\bibitem{Weickert1998}
{\sc J.~Weickert}, {\em Anisotropic Diffusion in Image Processing}, Teubner
  Stuttgart, 1998.

\bibitem{weickert1998paper}
{\sc J.~Weickert, B.~T.~H. Romeny, and M.~A. Viergever}, {\em Efficient and
  reliable schemes for nonlinear diffusion filtering}, IEEE transactions on
  image processing, 7 (1998), pp.~398--410.

\bibitem{Weinen2017}
{\sc E.~Weinan, H.~Jiequn, and J.~Arnulf}, {\em Deep learning-based numerical
  methods for high-dimensional parabolic partial differential equations and
  backward stochastic differential equations}, ArXiv, 1706.04702v1 (2017).

\bibitem{Wu2018}
{\sc Y.~Wu, P.~Zhang, H.~Shen, , and H.~Zhai}, {\em Visualizing neural network
  developing perturbation theory}, ArXiv, 1802.03930v2 (2018).

\bibitem{ye2018}
{\sc J.~C. Ye, Y.~Han, and E.~Cha}, {\em Deep convolutional framelets: A
  general deep learning framework for inverse problems}, SIAM Journal on
  Imaging Sciences, 11 (2018), pp.~991--1048.

\bibitem{Zhu2018}
{\sc B.~Zhu, J.~Z. Liu, S.~F. Cauley, B.~R. Rosen, and M.~S. Rosen}, {\em Image
  reconstruction by domain-transform manifold learning}, Nature, 555 (2018),
  pp.~487--489.

\end{thebibliography}

\end{document}